\newcommand{\mrrand}[1]{\todo[color=red!40]{\footnotesize\textbf{MR:} #1}}
\newcommand{\mgrand}[1]{\todo[color=blue!40,size=\footnotesize]{\textbf{MG:} #1}}
\newcommand{\smid}{\mathbin{;}}
\renewcommand{\phi}{\varphi}
\newcommand{\err}{\operatorname{err}}
\newtheoremstyle{mythm}%
  {}%
  {}%
  {\itshape}%
  {}%
  {\bfseries}%
  {.}%
  {.5em}%
  {\thmname{#1}~\thmnumber{#2}\ifthenelse{\equal{\thmnote{#3}}{}}{}{~(\thmnote{#3})}}%
\newtheoremstyle{mydefn}%
  {}%
  {}%
  {\upshape}%
  {}%
  {\bfseries}%
  {.}%
  {.5em}%
  {\thmname{#1}~\thmnumber{#2}\ifthenelse{\equal{\thmnote{#3}}{}}{}{~(\thmnote{#3})}}%
\newtheoremstyle{myremark}%
  {}%
  {}%
  {\upshape}%
  {}%
  {\itshape}%
  {.}%
  {.5em}%
  {\thmname{#1}~\thmnumber{#2}\ifthenelse{\equal{\thmnote{#3}}{}}{}{~(\thmnote{#3})}}%
\newtheorem{theorem}{Theorem}[section]
\newtheorem{lemma}[theorem]{Lemma}
\theoremstyle{mydefn}
\newtheorem{definition}[theorem]{Definition}
\newtheorem{example}[theorem]{Example}
\theoremstyle{myremark}
\theoremstyle{mythm}
\begin{document}

\title{Learning MSO-definable hypotheses on strings}
\author{\large Martin Grohe\\\normalsize RWTH Aachen
  University\\[-0.8ex]\normalsize grohe@informatik.rwth-aachen.de
\and \large Christof L{\"o}ding\\\normalsize RWTH Aachen
  University\\[-0.8ex]\normalsize loeding@informatik.rwth-aachen.de
\and\large Martin Ritzert\\\normalsize RWTH Aachen
  University\\[-0.8ex]\normalsize ritzert@informatik.rwth-aachen.de}
\date{}
\maketitle

\begin{abstract}
We study the classification problems over string data for hypotheses
specified by formulas of monadic second-order logic MSO. The goal is
to design learning algorithms that run in time polynomial in the size
of the training set, independently of or at least sublinear in the size
of the whole data set. We prove negative as well as positive
results. If the data set is an unprocessed string to which our
algorithms have local access, then learning in sublinear time is
impossible even for hypotheses definable in a small fragment of
first-order logic. If we allow for a linear time pre-processing of the
string data to build an index data structure, then learning of
MSO-definable hypotheses is possible in time polynomial in the size
of the training set, independently of the size of the whole data set.
\end{abstract}

\section{Introduction}

We study classification problems in a declarative framework \citep[introduced in][]{grohe2004learnability,grorit17} where instances are elements or tuples of elements of some background structure and hypotheses are specified by formulas of a suitable logic, using parameters (or constants) from the background structure. The background structure, say $B$, captures properties of and relations between data points and more generally all kinds of structural information about the data. Over this background structure we can specify a parametric model by a formula $\phi(\bar x\smid\bar y)$ of some logic L, which has two types of free variables, the \emph{instance variables} $\bar x=(x_1,\ldots,x_k)$ and the \emph{parameter variables} $\bar y=(y_1,\ldots,y_\ell)$. Then \emph{instances} of our classification problem are tuples $\bar u\in U(B)^k$, where $U(B)$ denotes the universe of $B$. For each choice $\bar v\in U(B)^\ell$ of \emph{parameters}, the formula defines a function $\llbracket\phi(\bar x\smid\bar v)\rrbracket^B:U(B)^k\to\{0,1\}$ by 
\begin{equation*}
  \label{eq:1}
  \llbracket\phi(\bar x\smid\bar v)\rrbracket^B(\bar u):=
\begin{cases}
  1&\text{if }B\models\phi(\bar u\smid\bar v),\\
  0&\text{otherwise},
\end{cases}
\end{equation*}
where $B\models\phi(\bar u\smid\bar v)$ denotes that the structure $B$ satisfies $\phi$ if the instance variables $\bar x$ are interpreted by $\bar u$ and the parameter variables $\bar y$ by $\bar v$. We regard  $\llbracket\phi(\bar x\smid\bar v)\rrbracket^B$ as a hypothesis over the instance space $U(B)^k$, which we want to generate from a training set of labeled examples $(\bar u_i,\lambda_i)\in U(B)^k\times\{0,1\}$.\footnote{As such, this framework only allows it to describe binary classification problems, but it is easy to extend it to general classification problems.} We call hypotheses of the form $\llbracket\phi(\bar x\smid\bar v)\rrbracket^B$ for an L-formula $\phi(\bar x\smid\bar y)$ L-\emph{definable hypotheses} over $B$.

In this paper, background structures are \emph{strings}, which may model text data, but also traces of program executions, DNA sequences, transaction sequences, and in general streams of symbolic data. The logic L that we use to define our models is \emph{monadic second-order logic} MSO, which may be the best studied logic for strings and is closely related to finite automata \citep[see][]{tho97a}. Some of our results, in particular the lower bounds, hold for fragments of MSO such as first-order logic FO and even the existential and quantifier-free fragments of FO.

Within this framework, we may study two kinds of algorithmic problems, \emph{parameter learning} (or \emph{parameter estimation}), where we regard the formula $\phi(\bar x\smid \bar y)$ as
fixed and try to find parameters $\bar v$ that fit the data, and \emph{model learning} (or \emph{model estimation}), where we want to find a suitable formula $\phi(\bar x\smid\bar y)$ and parameters $\bar v$. Our algorithms follow an empirical risk minimization paradigm; for the model learning problem, we bound the quantifier rank of the formula $\phi(\bar x\smid\bar y)$ to avoid overfitting. Hence the algorithmic problem we need to solve is finding a parameter tuple $\bar v$, and for the model learning problem a formula $\phi(\bar x\smid\bar y)$, such that the hypothesis $\llbracket\phi(\bar x\smid\bar v)\rrbracket^B$ is consistent with, or minimizes the error on the training examples.
 
The input of the learning algorithms (both for parameter and model learning) consists of a set $T=\{(\bar u_1,\lambda_1),\ldots,(\bar u_t,\lambda_t)\}$ of labeled examples, but the algorithms also need access to the background structure $B$. We usually think of $B$ as being very large, and we want to avoid holding it in main memory or even looking at the whole structure. That is, we are looking for learning algorithms with a running time that is polynomial in $t$ (the number of training examples), but sublinear in the background structure $B$ under a reasonable model of accessing $B$. In \citep{grorit17}, the background structure $B$ is a graph, presumably of small degree, and the learning algorithms only have \emph{local access} to $B$, that is, they can only retrieve the neighbors of vertices that they already hold in memory. Initially, these are the vertices appearing in the training examples. 
The main result of \citep{grorit17} is that model learning for first-order logic is possible in time polynomial in the number $t$ of training examples and the maximum degree $d$ of the background graph $B$. 
The strings that we study as background structures in this paper are equipped with the $\leq$-relation which is of unbounded degree. Hence the results of \citep{grorit17} do not apply in this setting.
The polynomial that bounds the running time depends on the quantifier rank $q$ of the formula  $\phi(\bar x\smid\bar y)$ and the lengths $k,\ell$ of the tuples $\bar x,\bar y$. The crucial point is that this running time is independent of the size $n$ of the background structure (in a uniform cost model; otherwise it is poly-logarithmic in $n$).

\subsection{Our Results}
For the strings studied as background structures in this paper, we have also have a natural notion of local access: algorithms are only allowed to (directly) access the successor and predecessors of positions of a string that they already hold in memory. Our first result (Theorem~\ref{thm:noConsistentSublinearAlgorithmOnWords}) is negative: we prove that every (model or parameter) learning algorithm producing an FO-definable hypothesis consistent with the training examples (if there is one) necessarily needs time at least linear in $n$. Only if $\phi(\bar x\smid\bar y)$ is quantifier-free (Theorem~\ref{thm:quantifierFreeSingleInput}) or existential with only one instance variable, that is, $k=1$, (Theorem~\ref{thm:unary-existential}) we obtain a model learning algorithm for FO running in time polynomial in $t$, independently of $n$. We can strengthen our linear lower bound in such a way that it already applies to existential FO-formulas with two free instance variables (Theorem~\ref{thm:binaryExistentialFormualsNotSublinear}).

The negative results are not very surprising, because the local access model to the background string $B$ is extremely restrictive. For example, it is impossible for an algorithm to find the first position in $B$ that is labeled by symbol `$a$'. We also consider a less restrictive access model, where we allow a linear time pre-processing of the background string to build an index data structure that allows for more global access to $B$. The pre-processing takes place before the algorithm sees the training examples. Then in the actual learning phase, the algorithm only has local access to $B$ and the index structure, that is, is only allowed to follow pointers. Our main result (Theorem~\ref{the:precomputation-unary}) states that after such a linear time (in $n$) pre-processing phase, both parameter and model learning for MSO-definable hypotheses are possible in time polynomial in $t$. 

Technically, this theorem heavily relies on the connections between monadic second-order logic, finite automata, and semi-group theory. The index data structure we built in the pre-processing phase is the Simon Factorization Forest \citep{Simon90,Kufleitner08} for a suitable monoid associated with MSO-definable hypotheses.

\subsection{Related Work}
Closely related to our framework is that of inductive logic programming (ILP)
\cite[see, for example,][]{cohpag95,kiedze94,mug91,mug92,mugder94}. The two main differences are that we encode background knowledge in a background structure, whereas the ILP framework axiomatizes it in a background theory, and that we work with MSO, whereas ILP focuses on FO, possibly in a recursive setting. Other recent logical frameworks for machine learning, mainly in the context of database and verification applications can be found in \citep{aboangpap+13,bonciusta16,lodmadnei16,garneimadrot16,jorkai16}.

There are also numerous results on learning automata and regular languages, negative \citep{Angluin78,Gold78,PittW93,KearnsValiant94,Angluin90} as well as positive \citep{Angluin87,RivestS93,computationalLearningTheory,GarciaO92}, the latter mainly in an active learning framework. Technically, all these results seem unrelated to ours.

\section{Preliminaries}

We considers strings (or words) over an alphabet $\Sigma$. The set of all such finite strings is denoted by $\Sigma^*$, and the empty word by $\varepsilon$.
In the logical setting, we view words as structures $B$ over the signature $\tau = \{<, (R_a)_{a \in \Sigma})$ with universe $U(B) = \{1,\ldots,n\}$ for words of length $n$. We also refer to the elements of $U(B)$ as positions of the word. The relation $<$ is the natural ordering of the positions, and each $R_a$ is a unary predicate for the $a$-labeled positions. Note that we do not have a direct successor relation for positions (this is only relevant for the results in Section~\ref{sec:existential} which can be extended to include the successor relation but become much more technical in that setting).
In general, we do not distinguish between the relational representation and the sequence of alphabet symbols.

We use standard first-order logic (FO) over these word structures. 
Monadic second-order logic (MSO) extends FO by additional quantification over sets of positions.
We use lowercase letters $x,y,z$ to denote first-order variables and the corresponding uppercase letters for set variables.
The \emph{quantifier rank} of a formula $\varphi$ is the maximal number of nested quantifiers in the formula.

We refer to the introduction for the basic definitions on our learning model. For a formula $\varphi(\bar x \smid \bar y)$ we define the \emph{arity} of $\varphi(\bar x \smid \bar y)$ to be the number of instance variables in $\bar x$. 
For the case of a single instance variable $x$ we speak of \emph{unary} formulas. 

For a word $B$, a training set $T \subseteq U(B)^k \times \{0,1\}$ is called \emph{$\phi$-consistent} if there are parameters $\bar v \in U(B)^\ell$ such that for all $(\bar u,\lambda) \in T$ the classification is $\lambda = \llbracket\phi(\bar x\smid\bar v)\rrbracket^B(\bar u)$. We also say that $\bar v$ is consistent with $\varphi$, $B$,  and $T$. 

As mentioned in the introduction, we distinguish between parameter learning and model learning. For the parameter learning problem, we assume a fixed formula $\varphi(\bar x \smid \bar y)$. A \emph{parameter learner} $\mathcal{L}$ for $\varphi$ has as input a word $B$ and a training set $T$ over $B$. Using the access model for $B$ explained in the introduction, $\mathcal{L}$ produces as output a tuple $\bar{v} = \mathcal{L}(B,T)$ of parameters. We say that $\mathcal{L}$ is a \emph{consistent parameter learner} for $\varphi$ if $\mathcal{L}(B,T)$ is consistent with $\varphi$, $B$, $T$ for all possible $B$ and $\varphi$-consistent $T$.

For the model learning problem, the formula $\phi(\bar x\smid\bar y)$ has to be found by the algorithm. However, we do fix a maximum quantifier rank $q$ and the number $\ell=|\bar y|$ of parameters. Note that the arity $k=|\bar x|$ is also fixed, because the instance space of our learning problem is $U(B)^k$. The view that $q,k,\ell$ are fixed, which will be important for our complexity analysis, is motivated by an analogy with database theory. Our whole declarative learning framework is inspired by the declarative framework of relational database systems. The logic in which we specify our models and hypotheses corresponds to the database query language, and the background structure corresponds to the database. It is common in database theory to analyze the complexity of the query evaluation problem, which corresponds to the learning problems we consider here, by regarding the query as fixed and the database as the variable input; this perspective is known as \emph{data complexity} \citep{var82}. The data complexity approach is justified by arguing that queries are usually human-written and not too large, certainly in comparison with the size of the data, and that therefore we can treat the query size as constant. Similarly, if we aim for explanatory and human-understandable models in machine learning, and want to avoid over-fitting, we may want to restrict the quantifier rank and the number of the parameters of the formulas defining the models. Adapting the database terminology, we may say that in this paper we analyze the \emph{data complexity} of parameter and model learning. 

Now if we fix $q,k,\ell$, there is only a finite number of formulas $\phi(\bar x\smid\bar y)$ that our algorithm needs to consider, because up to logical equivalence there is a only a finite number of MSO-formulas of quantifier rank at most $q$ with at most $k+\ell$ free variables. We may actually assume that there is a fixed formula $\phi(\bar x\smid\bar y)$ of quantifier rank $q$ and with $|\bar y|=\ell$. Then the goal of our model learning algorithm is to compute, given a training set $T$ and structure $B$ such that $T$ is $\phi$-consistent, a formula $\phi'(\bar x\smid\bar y')$ and a parameter tuple $\bar v'$ such that $\bar v'$ is consistent with $\phi'$, $B$, and $T$. Furthermore, for simplicity we may assume that the learning algorithm ``knows'' $\phi(\bar x\smid\bar y)$. These assumptions are justified by the observation that otherwise the algorithm can iterate through all the (finitely many) possible formulas. Note that this makes model learning a simpler problem than parameter learning (in the sense that an algorithm for parameter learning yields a model learning algorithm).
In \citep[Section 3]{grorit17}, there is a simple example illustrating that model learning can be strictly simpler (also see Example~\ref{exa:1} below).  We may further allow the formula $\phi'(\bar x\smid\bar y')$ to have a larger quantifier rank $q'\ge q$ and a larger number $\ell':=|\bar y'|\ge\ell$ of parameter variables than $\phi$.  We refer to such a learning algorithm $\mathcal L$ as a \emph{$(q',\ell')$-formula learner}. We say that $\mathcal{L}$ is a \emph{formula learner} for $\varphi$ if it is a $(q',\ell')$-formula learner for $\varphi$ for some numbers $q',\ell'$. Note that each parameter learner for $\varphi$ is also a $(q,\ell)$-formula learner for $\varphi$. 

\begin{example}\label{exa:1}
  Suppose our background string $B$ is over the alphabet $\Sigma=\{a,b\}$. 

  Let $\phi_1(x\smid y)= R_a(x)\wedge x\le y$. Let $\mathcal L_1$ be the algorithm that, given a training set $T$ (and local access to $B$), returns the largest position $v$ such that $(v,1)\in T$. It is easy to see that $\mathcal L_1$ is a consistent parameter learner for $\phi_1$.

  Now consider $\phi_2(x\smid y)= R_a(x)\wedge R_b(y)\wedge x\le y$. Then a consistent parameter learner for $\phi_2$ has to search for the first position $v$ in $B$ that is labeled by $b$ and is greater than or equal to all $u$ such that $(u,1)\in T$. It may take time linear in $|B|$ to find such a $v$.

  However, it is easy to construct a $(0,1)$-formula learner $\mathcal L_2$ for 
  $\phi$: given $T$, it returns the formula $\phi_1(x\smid y)$ and as parameter the largest position $v$ such that $(v,1)\in T$.  
\end{example}

We only require our learning algorithms to return hypotheses consistent with the training set. In fact, all of our algorithms can be generalized in such a way that they return a hypothesis with minimum training error if there is no consistent one (we leave the details to the full version of this paper).

To justify that such learning algorithms return hypotheses that generalize well, we appeal to the standard result from PAC-learning (also see Section~\ref{sec:pac}): A consistent learner is also a PAC-learner over a hypothesis space with bounded VC-dimension using a training sequence of size polynomial in the VC-dimension and the constants of the error bounds  \citep[see][]{shaben14}.
The following theorem shows that our hypothesis space is of bounded VC-dimension.

\begin{theorem}[\citet{grohe2004learnability}]
  For $q,k,\ell$ there is a $d$ such that for every string $B$ the family of all hypotheses $\llbracket\phi(\bar x\smid\bar v)\rrbracket^B$, where $\phi(\bar x\smid\bar y)$ is an MSO-formula of quantifier rank at most $q$, $|\bar x|=k,|\bar y|=\ell$, and $\bar v\in U(B)^\ell$, has VC-dimension at most $d$.  \end{theorem}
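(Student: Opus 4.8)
The plan is to bound the growth (shatter) function of the hypothesis class and show it is polynomial in the size of any candidate shattered set, which immediately caps the VC-dimension; the engine of the argument is the composition theorem for MSO over strings.

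First I would collect the standard finiteness facts, all provable via Ehrenfeucht--Fra{\"i}ss{\'e} games and Hintikka formulas: (i) up to logical equivalence there are only finitely many MSO-formulas over $\tau$ of quantifier rank at most $q$ with at most $k+\ell$ free variables -- call this number $N=N(q,k,\ell,\Sigma)$; (ii) there are only finitely many MSO-$q$-types of plain (unmarked) words over $\Sigma$ -- call this number $M_0=M_0(q,\Sigma)$. I would also use the composition theorem for MSO over concatenations: for any split $w=w_1w_2$, the type $\operatorname{tp}_q(w\smid\bar a)$ with a marked element tuple $\bar a$ is a fixed function of the pair $\operatorname{tp}_q(w_1\smid\bar a_1),\operatorname{tp}_q(w_2\smid\bar a_2)$, where $\bar a_i$ is the part of $\bar a$ in $w_i$; together with the trivial fact that the $q$-type of a sub-tuple is determined by the $q$-type of the whole tuple.

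The core step is the following claim: for every string $B$ and all $\bar u_1,\dots,\bar u_m\in U(B)^k$, the number of subsets of $\{\bar u_1,\dots,\bar u_m\}$ realizable as $\{\,j:B\models\phi(\bar u_j\smid\bar v)\,\}$ -- ranging over all $\phi$ of quantifier rank $\le q$ with $|\bar x|=k$, $|\bar y|=\ell$ and all $\bar v\in U(B)^\ell$ -- is at most $N\cdot p(m)$ for a polynomial $p$ of degree $\ell$ whose coefficients depend only on $q,k,\ell,\Sigma$. Indeed, whether $B\models\phi(\bar u_j\smid\bar v)$ depends only on $\operatorname{tp}_q(B\smid\bar u_j,\bar v)$, hence only on $\operatorname{tp}_q(B\smid\bar u_1,\dots,\bar u_m,\bar v)$; so once the equivalence class of $\phi$ is fixed (one of $N$ choices), the realized subset is a function of this one type. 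It thus suffices to bound, for fixed $\bar u_1,\dots,\bar u_m$, the number of distinct types $\operatorname{tp}_q(B\smid\bar u_1,\dots,\bar u_m,\bar v)$ as $\bar v$ varies. Mark the at most $km$ positions occurring in $\bar u_1,\dots,\bar u_m$; they cut $B$ into at most $km+1$ unmarked factors whose $q$-types are fixed. By the composition theorem, $\operatorname{tp}_q(B\smid\bar u_1,\dots,\bar u_m,\bar v)$ is then determined by: (a) a \emph{skeleton} recording, for each of the $\ell$ components of $\bar v$, which of the $\le km$ marked positions it equals or which of the $\le km+1$ factors it lies strictly inside, together with the left-to-right order of components sharing a factor -- at most $\big((2km+1)\ell\big)^{\ell}$ choices; (b) the $\Sigma$-labels of the components of $\bar v$ -- at most $|\Sigma|^\ell$ choices; (c) the $q$-types of the plain sub-factors created inside the at most $\ell$ factors that receive a component of $\bar v$ -- at most $M_0^{2\ell}$ choices, since the factors receiving no component keep their already-fixed type. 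Multiplying, we may take $p(m):=\big((2km+1)\ell\big)^{\ell}\,|\Sigma|^\ell M_0^{2\ell}$, a polynomial of degree $\ell$ in $m$.

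Granting the claim, if $\{\bar u_1,\dots,\bar u_m\}$ is shattered by our family then all $2^m$ subsets are realized, so $2^m\le N\cdot p(m)$; since the right-hand side is polynomial in $m$, this inequality fails once $m$ exceeds an explicit bound $d=d(q,k,\ell,\Sigma)$, and taking that $d$ proves the theorem. I expect the only real obstacle to be the counting in (a)--(c), and in particular resisting the naive route: the number of MSO-$q$-types of $(km+\ell)$-tuples over \emph{arbitrary} strings is already exponential in $km$, which is useless here. The point of the composition theorem is that, with $\bar u_1,\dots,\bar u_m$ \emph{fixed}, all remaining freedom is confined to the placement and labelling of the $\ell$ parameter positions among linearly (in $m$) many slots, plus the $q$-types of only $O(\ell)$ affected sub-factors -- a polynomial, not exponential, amount of data. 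Verifying carefully that $\operatorname{tp}_q(B\smid\bar u_1,\dots,\bar u_m,\bar v)$ really is a function of exactly the data in (a)--(c), together with the bookkeeping around coinciding positions and marked factors, is the technical heart; the finiteness inputs and the final Sauer--Shelah-style inequality are routine.
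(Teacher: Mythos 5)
The paper does not prove this statement at all: it is imported verbatim from \citet{grohe2004learnability} as a black-box fact, so there is no in-paper argument to compare yours against. Judged on its own, your proof is correct and is essentially the standard route (and close in spirit to how the cited reference argues for trees): bound the shatter function by $N\cdot p(m)$ with $p$ of degree $\ell$, where $N$ accounts for the finitely many inequivalent formulas and $p(m)$ bounds the number of $q$-types $\operatorname{tp}_q(B\smid\bar u_1,\dots,\bar u_m,\bar v)$ as $\bar v$ varies, via the composition theorem for ordered concatenations. The key observations are all in place: satisfaction of $\phi(\bar u_j\smid\bar v)$ is determined by the type of the sub-tuple, hence by the type of the full tuple; the fixed instance positions cut $B$ into $O(km)$ factors with frozen types; and the only remaining degrees of freedom are the placement of the $\ell$ parameter positions among $O(km)$ slots, their labels, and the types of the $O(\ell)$ newly created sub-factors, giving $\bigl((2km+1)\ell\bigr)^{\ell}|\Sigma|^{\ell}M_0^{2\ell}$. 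The counts in (a)--(c) are generous but valid upper bounds (including the coincidence cases), and $2^m\le N\cdot p(m)$ then yields an explicit $d(q,k,\ell,\Sigma)$. The only caution is that your bound, like the theorem as stated, depends on $\Sigma$ as well as on $q,k,\ell$; this is consistent with the paper's usage, where the alphabet is fixed.
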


\section{Non-Learnability of Unary FO-Definable Concepts}\label{sec:counterexample}

\begin{theorem}\label{thm:noConsistentSublinearAlgorithmOnWords}
 There is no consistent formula learner for unary FO formulas whose running time is sublinear in the length of the string.
\end{theorem}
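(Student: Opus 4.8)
The plan is to fix one unary FO formula that is genuinely ``global'' and to show that no consistent formula learner for it can be sublinear. Concretely, take $\varphi(x\mid y_1,\dots,y_\ell):=R_a(x)\wedge\bigvee_{i=1}^{\ell}\big(R_a(y_i)\wedge\forall z\,((x\le z\le y_i\vee y_i\le z\le x)\to R_a(z))\big)$, which says that $x$ is $a$-labeled and lies in the same maximal $a$-block as one of the parameters; here $\ell$ is a fixed small constant ($\ell=3$ suffices). Suppose for contradiction that $\mathcal L$ is a consistent $(q',\ell')$-formula learner for $\varphi$ whose running time on instances with a fixed number $t$ of examples is sublinear in $n$. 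First I would pick $t$ to be a large constant depending only on $q'$ and $\ell'$ (in particular $t>2\ell'+2$ and $t$ larger than a bound $C(q',\ell')$ defined below). Then, for each large $n$, I build a background string $B$ of length $\Theta(n)$ consisting of $2\ell$ very long maximal $a$-blocks separated by single $b$'s, of which $\ell$ are declared \emph{positive} and $\ell$ are declared \emph{negative}, and I place the $t$ examples deep inside these blocks: the positive examples (more than $\ell'$ of them) inside the positive blocks, the negative examples (also more than $\ell'$, spread so that no positive block contains a negative example or vice versa) inside the negative blocks, keeping every example much farther than $\mathcal L$'s running time from every block boundary and from every other example. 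Then $T$ is $\varphi$-consistent on $B$, witnessed by any tuple of parameters picking one position from each positive block.

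The crucial indistinguishability observation is that, since each block is far longer than the running time of $\mathcal L$ and every example sits far inside a block, $\mathcal L$ — having only local successor/predecessor access, and no way to compare or to count occurrences of $b$ between far-apart stored positions without paying for the traversal — never reaches a $b$-position or a block boundary. Hence it cannot tell which example lies in which block, it behaves identically on the whole family of strings obtained from $B$ by moving the block boundaries around (equivalently, by re-assigning examples to blocks of the same polarity), and every parameter in its output $\varphi'(x\mid\bar v')$ is one of the explored positions, all of which are $a$-labeled and lie deep inside long $a$-blocks. So the adversary may first read off $\mathcal L$'s output $(\varphi',\bar v')$ and only then fix the block assignment, and I claim it can fix it so that $\varphi'(\cdot\mid\bar v')$ misclassifies some example on $T$. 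By the composition theorem for finite words, the truth of $\varphi'(u_i\mid\bar v')$ is determined by the $q'$-types of the segments into which $B$ is cut by $\{u_i\}\cup\bar v'$ together with the order type of these points; and because every maximal $a$-run occurring in such a segment is longer than the aperiodicity threshold of $q'$, the $q'$-type of a segment depends only on the number of intervening $b$'s, capped at a constant. Consequently each example has only $C(q',\ell')$ possible ``configurations'' relative to $(\varphi',\bar v')$, so among our $t>C(q',\ell')$ examples two must share a configuration and hence receive the same truth value under $\varphi'(\cdot\mid\bar v')$; the block assignment is then arranged so that two such examples carry opposite labels. (The same mechanism dispatches the obvious candidate outputs directly: one cannot list the $>\ell'$ positive examples, nor exclude the negative blocks by ``same-block'' clauses without being able to tell which examples share a block, nor fence the negative blocks by order-intervals without being able to name their boundaries, nor separate positive from negative blocks by a length threshold because $q'$ is bounded and all blocks are long.) This contradicts consistency of $\mathcal L$, since $T$ is $\varphi$-consistent on every string in the family.

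The step I expect to be the main obstacle is precisely this adversarial choice of block assignment: one must show that for \emph{some} admissible assignment the bounded configuration count really does collapse an oppositely-labeled pair of examples for \emph{every} possible output $(\varphi',\bar v')$ at once, while keeping $T$ $\varphi$-consistent and keeping all block boundaries far from all examples. This requires careful bookkeeping of how $t$, $\ell$, the number of blocks, and the block lengths are chosen in terms of $q'$ and $\ell'$ (e.g.\ a pigeonhole step putting many examples between two consecutive parameters of $\bar v'$, so that a collision can be forced inside a single segment where the configuration reduces to just the capped $b$-counts on either side). The reason this can be made to work is that $t$ is a constant whereas $\mathcal L$'s running time is $o(n)$: for $n$ large, $t$ examples with pairwise-disjoint protected neighbourhoods exceeding $\mathcal L$'s reach, placed inside $2\ell$ blocks of length $\Theta(n)$, comfortably fit inside a string of length $n$. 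A secondary point that needs care is pinning down the access model precisely enough — in particular that $\mathcal L$ cannot compare two stored positions, nor count the $b$'s between them, without paying the traversal cost — since this is exactly what confines its output parameters to a region that looks homogeneous to any bounded-quantifier-rank formula.
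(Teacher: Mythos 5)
Your overall architecture (a family of background strings indistinguishable to a sublinear learner near the examples, a pigeonhole over the learner's output parameters, and a type/pumping argument collapsing two oppositely labelled examples) matches the paper's, but your choice of target formula breaks the argument at the root. The hypothesis class defined by your $\varphi(x\smid y_1,\dots,y_\ell)$ (``$x$ lies in the same maximal $a$-block as some $y_i$'') consists of unions of at most $\ell$ intervals, and such concepts admit a trivial consistent formula learner that never touches the background string at all: sort the training examples by position; $\varphi$-consistency forces the positive examples to form at most $\ell$ maximal runs uninterrupted by negative examples (each positive block is an interval containing a parameter and no negative example, so all positive examples in it belong to one run, and there are at most $\ell$ such blocks); output the quantifier-free formula $\bigvee_{i=1}^{\ell}\bigl(y'_{2i-1}\le x\le y'_{2i}\bigr)$ with the $2\ell$ parameters set to the least and greatest positive example of each run. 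This hypothesis is consistent with $T$ on every string in your family, is computed in time $O(|T|\log|T|)$, and all its parameters are training positions, so none of your indistinguishability or configuration-counting machinery touches it. Your parenthetical list of ``obvious candidate outputs'' misses exactly this one: the learner does not need to locate block boundaries to fence off the negative examples, because the convex hulls of the positive runs already do. This is precisely the phenomenon of the paper's Example~\ref{exa:1} and Theorems~\ref{thm:quantifierFreeSingleInput} and~\ref{thm:unary-existential} (interval-shaped targets are easy), and it is why the paper's formula instead flips its meaning at the parameter using a $b$/$c$-labelling of block entries: the induced classification then alternates across $2\ell+3$ segments, so it is not a union of few intervals anchored at example positions, and the decisive information (which separator symbol precedes a given block) is unreachable in sublinear time.

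Two further points would need repair even with a better target formula. First, your collision argument must produce a \emph{positive} and a \emph{negative} example sharing a configuration, not merely some pair; pigeonhole alone yields only the latter, and in your construction the obstruction is not bookkeeping but the fact that order-based configurations genuinely separate the two classes. The paper sidesteps this by pigeonholing over the family $B_0,\dots,B_\ell$ to find one string whose critical $A_bA_b$ region contains no output parameter, and then pumping with period $s!$ inside that region, where the two relevant examples already carry opposite labels by construction. Second, your appeal to an ``aperiodicity threshold'' only handles FO hypotheses, whereas a $(q',\ell')$-formula learner in this paper may output MSO formulas; the paper therefore argues with an $s$-state automaton equivalent to the hypothesis and makes all block lengths multiples of $s!$, which any corrected version of your argument would need to imitate.
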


The theorem follows immediately from the following lemma.

\begin{lemma} \mgrand{Rephrased the lemma for alter reference}
  \label{lem:counterexample} 
  There is an FO formula
  $\varphi(x\smid y) = \exists z \forall z' \psi(z,z',x,y)$ with
  quantifier-free $\psi$ such that for all 
  $(q,\ell)$-formula learners $\mathcal L$ with a sublinear running time the
  following holds.
  There is a string $B$ and a $\varphi$-consistent training set $T$ of
  size $|T|=2\ell+3$ such that the hypothesis  $H_T$ produced by
  $\mathcal L$ on input $B$ and $T$ is not consistent with $T$.
\end{lemma}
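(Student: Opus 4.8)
The plan is an adversary argument exploiting the weakness of the local‑access model: a sublinear algorithm can only inspect a bounded‑radius neighbourhood of each training position, so if the training positions are placed far apart it can neither learn their left‑to‑right order nor see any letter that lies far from all of them. I would take, over the alphabet $\{a,b\}$,
\[
  \varphi(x\smid y)\;=\;\exists z\,\forall z'\,\bigl(z=x\,\wedge\,y<z\,\wedge\,(y<z'\wedge z'<z\to R_a(z'))\bigr),
\]
which has the required shape $\exists z\forall z'\psi$ with quantifier‑free $\psi$, says ``$y<x$ and every position strictly between $y$ and $x$ carries the letter $a$'', and (in contrast to $\exists z\,(R_b(z)\wedge y<z<x)$) is genuinely $\Pi_1$, so it does not fall under the positive result Theorem~\ref{thm:unary-existential}. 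For a fixed parameter $v$ the hypothesis $\llbracket\varphi(x\smid v)\rrbracket^B$ is the interval from $v$ (exclusive) up to the first $b$ after $v$.

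Fix a $(q,\ell)$‑formula learner $\mathcal L$ whose running time is sublinear in the string length $n$. I would build $B$ of a large length $n$ as follows: choose $2\ell+3$ ``slots'' $w_1<\dots<w_{2\ell+3}$ that are pairwise at distance $\Theta(n)$ and far from both ends; place a block of $M$ isolated $b$'s (each flanked by $a$'s) far to the left of $w_1$ and another such block far to the right of $w_{2\ell+3}$; place one further ``critical'' $b$ in the middle of the gap $(w_{\ell+2},w_{\ell+3})$; make every other position an $a$; and put $v$ at some $a$‑position between the left block and $w_1$. Here $M>2^q$ is a constant, chosen so that no rank‑$q$ formula can tell $M$ isolated $b$'s from $M+1$. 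Put $\lambda_j=1$ for $j\le\ell+2$ and $\lambda_j=0$ for $j\ge\ell+3$; since $(v,w_j)$ is $b$‑free exactly for $j\le\ell+2$, the training set $T$ is $\varphi$‑consistent via $v$, and $|T|=2\ell+3$. For $n$ large, $o(n)$ local‑access steps starting from the training positions stay inside the $\Theta(n)$‑gaps, so $\mathcal L$ only ever sees $2\ell+3$ pairwise disjoint, mutually isomorphic, all‑$a$ islands carrying the labels; in particular it cannot distinguish the slots except by their labels, and every parameter it outputs lies inside one of the islands (new handles being reachable only by stepping from handles already held). Since $\mathcal L$'s behaviour depends only on this picture, I would first run $\mathcal L$ on a generic instance of this form, record which $\le\ell$ slots become ``marked'' (receive a parameter), and only then fix which abstract training position plays the role of which $w_j$: as at most $\ell$ of the $\ell+2$ slots labelled $1$ and at most $\ell$ of the $\ell+1$ slots labelled $0$ are marked, I can arrange that the rightmost $1$‑slot $w_{\ell+2}$ and the leftmost $0$‑slot $w_{\ell+3}$ are both unmarked.

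It remains to show that the output hypothesis $H_T=\llbracket\varphi'(x\smid\bar v')\rrbracket^B$ (with $\varphi'$ of quantifier rank $\le q$ and $\bar v'$ the at most $\ell$ output parameters) is inconsistent with $T$. For this I would show that $w_{\ell+2}$ and $w_{\ell+3}$ have the same rank‑$q$ first‑order type in $(B,\bar v')$, which forces $H_T(w_{\ell+2})=H_T(w_{\ell+3})$ although $\lambda_{\ell+2}=1\neq0=\lambda_{\ell+3}$. This is a composition / Ehrenfeucht--Fra{\"i}ss{\'e} argument: the radius‑$2^q$ neighbourhoods of $w_{\ell+2}$ and $w_{\ell+3}$ are all‑$a$ and isomorphic; no parameter lies strictly between them and none within distance $2^q$ of either (every parameter sits in a slot $\le\ell+1$ or $\ge\ell+4$, hence on one side of both and at distance $\Theta(n)$ from both); and each of the two positions has more than $2^q$ isolated $b$'s on each side, the single critical $b$ between them being invisible to rank‑$q$ logic once those counts are past threshold. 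Splitting the string at $w_{\ell+2}$, respectively at $w_{\ell+3}$, the prefixes (carrying the parameters below the split) then have equal rank‑$q$ type, and so do the suffixes, hence so do the two pointed structures.

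I expect the main obstacle to be making this type‑equivalence fully rigorous — the precise statement that adjoining ``more of the same'' (extra blocks of $a$'s and one extra isolated $b$ past the counting threshold, all far from the marked points) does not change the rank‑$q$ type of a pointed string. This is exactly where every feature of the construction is used and where the size of $T$ comes from: a single distant $b$, or any bounded number of them, would be detectable, and then $\mathcal L$ could cheat by outputting the parameter‑free formula ``fewer than $k$ letters $b$ occur strictly to the left of $x$'' for a suitable constant $k$ (``no $b$ occurs strictly to the left of $x$'' in the extreme case of one $b$), which would classify $T$ correctly without any exploration at all. A secondary technical point is the rigorous verification that a sublinear number of local‑access steps really confines $\mathcal L$, and in particular all of its output parameters, to the islands.
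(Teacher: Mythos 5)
Your construction cannot work because the hypothesis class defined by your formula is trivially learnable, so the quantification ``for all $(q,\ell)$-formula learners'' defeats you. For any parameter $v$, $\llbracket\varphi(x\smid v)\rrbracket^B$ is an interval of positions, so in any $\varphi$-consistent training set the positively labelled positions are consecutive among the sorted training positions (a negative example lying between two positives would itself lie in the interval). Hence the constant-time learner that outputs the quantifier-free formula $y_1\le x\le y_2$ with $y_1,y_2$ the smallest and largest positive examples is a consistent formula learner for your $\varphi$ --- exactly the kind of ``cheating'' by switching to a simpler formula that Example~\ref{exa:1} warns about, and which your own monotone labelling ($\lambda_j=1$ for $j\le\ell+2$, $\lambda_j=0$ for $j\ge\ell+3$) makes unavoidable. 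Being $\Pi_1$ rather than existential does not help: what matters is the shape of the definable hypotheses, and intervals are recoverable from the training set alone. The paper instead uses a formula over $\{a,b,c\}$ whose behaviour \emph{switches} at the parameter (selecting $a$-blocks entered by $b$ before $y$ and by $c$ from $y$ on), which supports a $\varphi$-consistent training set with \emph{alternating} labels that no function of the training positions' relative order can explain.

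Two further steps would fail even with a better formula. First, you place the single critical region between the last $1$-slot and the first $0$-slot, a location the learner can read off from $T$ (it is given the numeric positions and their labels), so it can simply output $w_{\ell+2}$ as a parameter; you cannot ``arrange'' afterwards that the critical slots are unmarked. The paper avoids this by building $\ell+1$ mutually indistinguishable strings with the critical (double-$A_b$) block in $\ell+1$ different places and pigeonholing the at most $\ell$ parameters. Second, your final indistinguishability argument uses rank-$q$ \emph{first-order} types and threshold counting ($M>2^q$ isolated $b$'s), but the output hypothesis may be an MSO formula, and low-rank MSO can count $b$'s modulo $2$; in your string, ``the number of $b$'s strictly to the left of $x$ has the same parity as $M$'' is a consistent, parameter-free, low-rank MSO hypothesis separating $w_{\ell+2}$ from $w_{\ell+3}$. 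The paper handles arbitrary MSO hypotheses by passing to an equivalent automaton with at most $s$ states and pumping with period $s!$.
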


\begin{proof}\mgrand{Beweis etwas angepasst}
    We consider strings over the alphabet $\Sigma = \{a,b,c\}$.  
  We view strings over $\Sigma$ as consisting of \emph{$a$-blocks} (sequences of successive $a$-positions) that are separated by $b$ or $c$. 
  The \emph{entry} of an $a$-block is the position directly before that block, which is then labeled $b$ or $c$ (if the string starts with $a$, then the first block does not have an entry).
  The formula $\varphi(x\smid y)$ selects those $a$-positions whose block entry is either before $y$ and labeled $b$, or behind (including) $y$ and labeled $c$. 
  So the behavior of the formula switches at the parameter position. 
  The concrete formula is
\begin{align}
\phi(x\smid y) = R_a(x) \land \exists z (z<x) &\land ((R_b(z)\land z<y) \lor (R_c(z) \land z\geq y))\nonumber\\ 
											&\land \forall z' ((z<z'<x)\rightarrow R_a(z')) \label{fomula:phi}
\end{align}
Let $\mathcal{L}$ be a $(q,\ell)$-formula learner for arbitrary numbers $q,\ell$ whose running time is sublinear in the length of the string.  

We choose $s$ such that for every MSO formula with $\ell$ parameters
and quantifier rank at most $q$, there is an equivalent finite
automaton with at most $s$ states. Then we choose $r$ in such a way
that the running time of the learner $\mathcal L$ on an input string
$B$ of length 
\[
n=(2\ell+3)(3s!)(2r+2)
\]
is at most $r$. This choice of $r$ is possible since the runtime of
$\mathcal{L}$ is sublinear in the length of the input string.

Next, we construct strings $B_0,\dots,B_\ell$ of length $n$ and parameter positions $v_i$ such that $\varphi(x\smid y)$ selects the same set of positions for all $B_i$ with parameter $v_i$. 
These strings have the following shape: 
\[
B_i = (A_bA_c)^iA_b(A_bA_c)^{\ell+1-i},
\]
where the strings $A_b$ and $A_c$ are defined as $A_b = (ba^{2r+1})^{3s!}$ and $A_c = (ca^{2r+1})^{3s!}$. 

Each $B_i$ contains two successive $A_b$. 
The parameter position $v_i$ for $B_i$ is chosen to be the first $b$ in the second of the two successive $A_b$. 
Since the behavior of $\varphi$ flips at the parameter position, it should be clear that $\varphi$ indeed selects the same set of positions for each $B_i$ with parameter $v_i$, or formally, $\llbracket\phi(x,v_i)\rrbracket^{B_i}=\llbracket\phi(x,v_j)\rrbracket^{B_j}$ for all $i,j$.
This allows us to construct a single training set that is $\varphi$-consistent over all $B_i$. 
The training set $T$ contains one position in each substring $A_b$ or $A_c$ of $B_i$. 
The positions are chosen in the middle of an $a$-block (of length $2r+1$), which itself is in the middle of all the $a$-blocks of the respective substring $A_b$ or $A_c$. 
Formally, these are the positions $|A_b| \cdot j +\frac{|A_b|}{2} + r +2$ for $j \in \{0,\ldots,2\ell+2\}$ (note that $|A_b| = |A_c|$ so we do not need to distinguish them in the definition of the  positions). 
The classification of these positions starts with $1$ and then alternates between $0$ and $1$ (identical for every $B_i$).

By the choice of $r$, the algorithm $\mathcal{L}$ only has access to $a$-positions when it is executed on $B_i$ with training set $T$. 
Hence, it does not see any difference between the $B_i$ and produces the same hypothesis for all $B_i$.
In particular this includes the same set of at most $\ell$ parameters since these are part of the hypothesis. 
So there is one string $B^*$ for which no parameter is inside the two successive $A_b$ substrings. 
We now show that the hypothesis on this $B^*$ cannot distinguish the two positions in the two successive $A_b$ substrings, whereas their classification in the training set is different. 
This shows that the hypothesis produced by $\mathcal L$ on input $B^*$ and $T$ is not consistent with $T$.
The hypothesis cannot distinguish the examples from the $A_bA_b$ block.
To show this we observe that the formula learner $\mathcal L$ produces a hypothesis $\psi(x,\bar{v}')$ for $B_i$ and $T$ (we use $\bar{v}'$ for
the parameters because the notation $v_i$ is already in use in this proof).

For the hypothesis formula $\psi(x,\bar y)$ there is an equivalent DFA (deterministic finite automaton) $\mathcal A$. This DFA reads words over $\Sigma$ that are annotated in some form to mark the position $u$ of the instance variable, and the positions of the parameters. In particular, it accepts $B_i$ annotated with a position $u$ and the positions $\bar{v}'$ from the hypothesis if, and only if, $\psi(u,\bar{v}')$ holds in $B_i$. By the choice of $s$, there is such a DFA $\mathcal A$ with at most $s$ states.

We show that this DFA wrongly classifies one of the examples from the two successive $A_b$ subwords in $B_i$. More formally, let $u_1 < u_2$ be the positions from the training set that fall into the two successive $A_b$ subwords of $B_i$. Let $B_i^1$ and $B_i^2$ be the strings annotated with the parameters $\bar{v}'$ and additionally $B_i^1$ is annotated with $u_1$ as instance for $x$, and similarly for $B_i^2$ and $u_2$. We know that these two positions are classified differently in the training set. Note also that $B_i$ was chosen such that no parameter is inside the two successive $A_b$'s.

We analyze the runs of $\mathcal{A}$ on $B_i^1$ and $B_i^2$ on the two successive $A_b$ subwords.  The idea is illustrated in Figure~\ref{fig:stateTransitionsCounterexample}. Let $A'$ denote $A$ with the middle $a$-position carrying the marker for the  position of $x$, and let $A^* = A^{\frac{1}{2}s!}A'A^{\frac{1}{2}s!-1}$. Then the two successive $A_b$ subwords in $B_i^1$ and $B_i^2$ including the markers for $u_1$ and $u_2$ are of the form $C_1 = A^{s!}A^*A^{4s!}$ and $C_2 = A^{4s!}A^*A^{s!}$. This implies that the state before $A^*$ is the same in both runs: After reading the first $s!$ copies of $A$, the state is some $\sigma_1$ for both words (up to this point, $B_i^1$ and $B_i^2$ are the same). Then $B_i^1$ is followed by $A^*$. In $B_i^2$, the DFA reads another $3s!$ copies of $A$. Since $s!$ is a multiple of every possible length of a loop in $\mathcal{A}$, and $\mathcal{A}$ has already read $s!$ copies of $A$, the state before $A^*$ in $B_i^2$ is also $\sigma_1$. The same argument is used to show that both runs end on the same state $\sigma_2$ after having read $C_1$ and $C_2$, respectively.

Therefore, $\mathcal{A}$ cannot distinguish $C_1$ and $C_2$ and thus accepts both, $B_i^1$ and $B_i^2$, or rejects both.
This means that $\mathcal A$ will either accept or reject both training examples $u_1$ and $u_2$.
\end{proof}

\begin{figure}[h!]
\begin{center}
\input{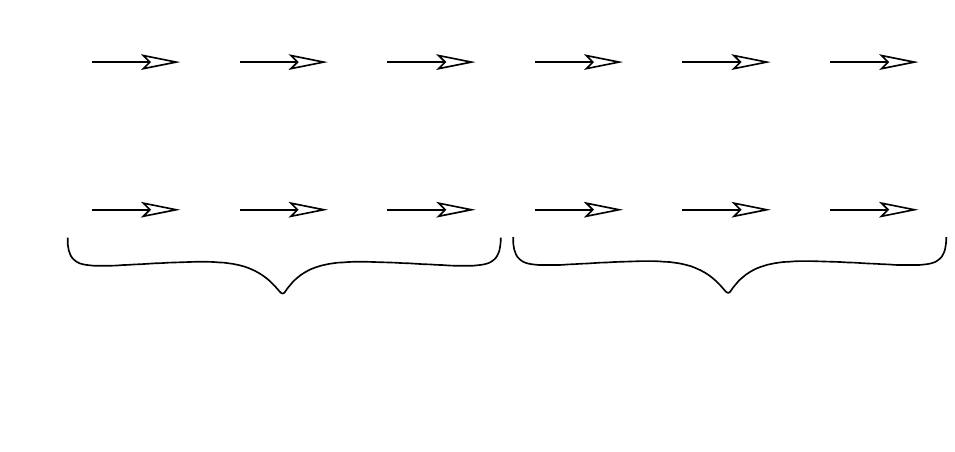_t}
\end{center}
\caption{State transitions of $\mathcal A$ on the $A_bA_b$ block for the two different training examples}
\label{fig:stateTransitionsCounterexample}
\end{figure}

\subsection{PAC Learning}
\label{sec:pac}
The general aim in machine learning is to perform well on unseen examples.
This is formalized in Valiant's probably approximately correct learning model. 
The idea is that a learning algorithm is \emph{probably approximately correct} (PAC), if over most of the training sets ('probably') it outputs a classification algorithm (or hypothesis) which has a low expected error on new examples ('approximately').
To get any bounds on the expected error, a fixed underlying but unknown probability distribution $\mathcal D$ over the examples is assumed. 
Then the examples are chosen independently from this distribution $\mathcal D$.
When talking about training sets, we implicitly assume that those are chosen independently according to some unknown but fixed distribution $\mathcal D$.

Here we show that there is no PAC learning algorithm with sublinear runtime.
\begin{definition}
Let $\mathcal L$ be a learning algorithm which outputs on input of $B$ and $T$ a hypothesis $H_T$.
Then $\mathcal L$ is \emph{probably approximately correct (PAC)} if for all probability distribution $\mathcal D$ over the instance space\mgrand{Added ``for all distributions''}
 \[\Pr_{T\sim \mathcal D} \left[ \err_\mathcal D(H_T)<\epsilon \right] > 1-\delta.\]
\end{definition}

Here the probability is taken over the training set $T$ where $T\sim \mathcal D$ means that the training examples are chosen independently according to $\mathcal D$. Furthermore,
$\err_\mathcal D(H_T)$ is the expected error on (new) examples chosen according to $\mathcal D$.

This means that a learning algorithm is \emph{probably approximately correct (PAC)} if with high probability over the choice of the training set $T$, the expected error of the hypothesis $H_T$ on new instances is low.

\begin{theorem}
Let $\mathcal L$ be a sublinear $(q,\ell)$-formula learner.
Then $\mathcal L$ is not a PAC learning algorithm.
\end{theorem}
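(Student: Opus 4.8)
The plan is to reduce to Lemma~\ref{lem:counterexample} by presenting $\mathcal L$ with a malicious distribution concentrated on the $2\ell+3$ ``special'' positions used in that lemma's training set. Write $p_1<\dots<p_{2\ell+3}$ for those positions (one in the middle of a length-$(2r+1)$ $a$-block inside each sub-block $A_b$ or $A_c$), $B_0,\dots,B_\ell$ for the strings built there, and $v_0,\dots,v_\ell$ for the corresponding parameters. For each $i$ I would take the target concept $c_i:=\llbracket\varphi(x\smid v_i)\rrbracket^{B_i}$ and the distribution $\mathcal D$ that is uniform on $\{p_1,\dots,p_{2\ell+3}\}$. Three features of the construction in the proof of Lemma~\ref{lem:counterexample} are what I would rely on: (a) $c_i$ restricted to $\{p_1,\dots,p_{2\ell+3}\}$ is the \emph{same} alternating labelling $1,0,1,0,\dots$ for every $i$, so the labelled sample obtained by drawing i.i.d.\ from $\mathcal D$ and labelling by $c_i$ has a distribution that does not depend on $i$; (b) since $\mathcal L$ runs in time at most $r$ and every example in such a sample sits in the middle of a long $a$-block, $\mathcal L$ never inspects a non-$a$ position and hence returns the same hypothesis $(\psi,\bar v')$ on $(B_i,\hat T)$ for all $i$; (c) because $|\bar v'|\le\ell<\ell+1$, some $B_{i^\ast}$ has no component of $\bar v'$ in its doubled block $A_bA_b$, and then, exactly as in the lemma, the $\le s$-state DFA for $\psi(x,\bar v')$ cannot separate the two sample positions lying in that block, which carry opposite labels, so $\llbracket\psi(x\smid\bar v')\rrbracket^{B_{i^\ast}}$ misclassifies one of $p_1,\dots,p_{2\ell+3}$ and thus has $\mathcal D$-error at least $\tfrac1{2\ell+3}$.

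With this in hand I would set $\epsilon=\tfrac1{2\ell+3}$ and $\delta=\tfrac1{4(\ell+1)}$, let $m$ be a sample size used by the (putatively PAC) learner $\mathcal L$ for these $(\epsilon,\delta)$, enlarged if necessary so that also $m\ge(2\ell+3)\ln\bigl(2(2\ell+3)\bigr)$, and instantiate the construction of Lemma~\ref{lem:counterexample} with the running-time budget $r$ taken large enough that $\mathcal L$ spends at most $r$ steps on any input consisting of a length-$n$ string with $n=(2\ell+3)(3s!)(2r+2)$ together with a training set of size $m$; this is possible because for a fixed $m$ the running time is still sublinear in $n=\Theta(r)$. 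A coupon-collector estimate then gives $\Pr[\hat T\text{ contains all }2\ell+3\text{ labelled examples}]\ge 1-(2\ell+3)e^{-m/(2\ell+3)}\ge\tfrac12$. On that event, by (b) and (c) there is an index $i^\ast(\hat T)\in\{0,\dots,\ell\}$ with $\err_{\mathcal D}\bigl(\mathcal L(B_{i^\ast},\hat T)\bigr)\ge\epsilon$, so summing over $i$,
\[
\textstyle\sum_{i=0}^{\ell}\Pr_{\hat T}\bigl[\err_{\mathcal D}(\mathcal L(B_i,\hat T))\ge\epsilon\bigr]\ \ge\ \tfrac12,
\]
and hence $\Pr_{\hat T\sim\mathcal D^m}\bigl[\err_{\mathcal D}(\mathcal L(B_{i_0},\hat T))\ge\epsilon\bigr]\ge\tfrac1{2(\ell+1)}>\delta$ for some $i_0$. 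Since $c_{i_0}=\llbracket\varphi(x\smid v_{i_0})\rrbracket^{B_{i_0}}$ is a target concept that $\mathcal L$, as a formula learner for $\varphi$, is supposed to handle, and $m$ lies above the claimed sample complexity of $\mathcal L$ for $(\epsilon,\delta)$, this contradicts $\mathcal L$ being PAC over the background structure $B_{i_0}$.

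The only genuinely delicate point — and the step I expect to be the main obstacle — is that the ``bad'' background string $B_{i^\ast}$ is not fixed in advance: it is determined by the parameter tuple $\bar v'$ that $\mathcal L$ outputs, which is itself a function of the random sample. I would deal with this exactly through the averaging over the $\ell+1$ candidate structures $B_0,\dots,B_\ell$ above; this is legitimate precisely because the sampling distribution and the labels on the special positions are common to all the $B_i$ (so one coupled sample $\hat T$ serves all of them) and because $\mathcal L$'s output is oblivious to which $B_i$ it is fed. The remaining ingredients — the coupon-collector bound, the choice of $r$ using sublinearity of the running time, and checking that each $c_i$ is in the relevant hypothesis class — are routine.
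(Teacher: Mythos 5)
Your proof is correct and follows the same basic strategy as the paper's: run the counterexample of Lemma~\ref{lem:counterexample} against the uniform distribution on the $2\ell+3$ special training positions. The difference lies in how the mismatch between the lemma's fixed training set $T$ and an i.i.d.\ sample $\hat T$ from $\mathcal D$ is handled. The paper simply invokes the lemma as a black box for one fixed pair $(B,T)$ and then assumes ``without loss of generality'' that the learner does not perform worse when it sees more examples, concluding that $H_{T'}$ is inconsistent with $T$ for every subset $T'$ and hence that $\Pr[\err_{\mathcal D}(H_{T'})<\epsilon]=0$. You avoid that monotonicity assumption (which is not really without loss of generality for an arbitrary learner) by re-opening the lemma's construction: you observe that all $\ell+1$ strings $B_0,\dots,B_\ell$ present the learner with identically distributed samples, that the learner's output is oblivious to which $B_i$ it is given, and that for each realized sample some $B_{i^\ast}$ must fail; averaging over $i$ then pins down a single fixed bad instance $B_{i_0}$ with failure probability at least $\tfrac{1}{2(\ell+1)}>\delta$. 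This is more work and yields a weaker (though entirely sufficient) failure probability than the paper's probability-one statement, but it is the more rigorous argument. One simplification you could make: the coupon-collector step is actually superfluous. The DFA argument shows that whatever hypothesis the learner outputs assigns the \emph{same} value to the two oppositely-labelled positions in the doubled block of $B_{i^\ast}$, and this holds for \emph{every} sample supported on the special positions, not only those containing all $2\ell+3$ examples; so $\sum_i\Pr_i\ge 1$ and you could take $\delta$ up to $\tfrac{1}{\ell+1}$ without any condition on $m$.
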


\begin{proof} \mgrand{Den Beweis vorher habe ich nicht so recht
    verstanden, zumindest nicht diese komische Argument wo man erst
    einen der Strings zufällig auswählen muss. Ich glaube, so stimmt
    es (und ist einfacher), ich hoffe, ich übersehe da nichts.}
  We choose a string $B$ and a training set $T$ of size $|T|=2\ell+3$
  according to Lemma~\ref{lem:counterexample}. Now we let $\mathcal D$
  be the uniform distribution over the position appearing in $T$. Then
  if we draw examples from the distribution $\mathcal D$, the learner
  only sees examples from $T$, and hence we know that it makes a
  mistake. There is a small technical issue here: if we draw the
  examples randomly from $T$, the learner $\mathcal L$ may actually
  only see a subset $T'\subseteq T$, because some may be
  repeated. However, without loss of generality we may assume that
  $\mathcal L$ does not perform worse if it sees
  more examples. 
  \mrrand{We further assume that the quality of the returned hypothesis does not depend on possible repetitions and is independent of the order of the examples.}
  Hence if we denote the hypothesis produced by
  $\mathcal L$ on input $B$ and $T'\subseteq T$ by $H_{T'}$, then
  $H_{T'}$ is not consistent with $T$, and we have
  \[
  \err_{\mathcal D}(H_{T'})\ge\frac{1}{2\ell+3},
  \]
  because $H_{T'}$ is wrong on at least one of the $2\ell+3$ elements
  in the support of $\mathcal D$. Thus with
  $\epsilon=\frac{1}{2\ell+3}$ we have
  \[
  \Pr_{T'\sim\mathcal D}[\err_{\mathcal D}(H_{T'})<\epsilon]=0.
  \]
  This implies that $\mathcal L$ is not a PAC learning algorithm.
\end{proof}

\section{Quantifier-Free and Existential Formulas}\label{sec:existential}
The non-learnability result from Section~\ref{sec:counterexample} applies to formulas with at least one quantifier alternation (Lemma~\ref{lem:counterexample}). 
In this section we therefore consider simpler classes, namely quantifier-free and existential formulas.
A quantifier-free formula $\psi$ consists of atoms checking membership in the relations $\tau = \{ <, (R_a)_{a\in \Sigma} \}$ or boolean combinations of those.
Existential formulas are of the form $\phi(\bar x) = \exists \bar z \psi(\bar x,\bar z)$ where $\psi$ is quantifier-free.

For quantifier-free and unary existential formulas, there are formula learners running in time polynomial in $|T|$. 

\begin{theorem}\label{thm:quantifierFreeSingleInput}
  There is a  consistent formula learner for unary quantifier-free formulas whose running time is in $\mathcal O(|T|)$ for a training set $T$.
  For arbitrary quantifier-free formulas with $k$ instance and $\ell$ parameter variables there is a consistent formula learner running in time $\mathcal O((2|T|k+1)^{\ell}|T|)$.
\end{theorem}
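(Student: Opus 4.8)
The plan is to reduce both parts to a bounded search over candidate parameter tuples, exploiting that for \emph{model} learning we may replace $\phi$ by a new formula synthesised directly from atomic types; I use throughout the standard fact that the truth of a quantifier-free formula on a tuple depends only on the atomic type of that tuple, i.e.\ on the $<$-order and the $\Sigma$-labels of its entries. For the general statement, fix a quantifier-free $\phi(\bar x\smid\bar y)$ with $|\bar x|=k$, $|\bar y|=\ell$, and let $P$ be the set of positions occurring in the examples of $T$, so $|P|\le|T|k$. From $P$ the algorithm builds, using $\mathcal O(|T|)$ comparisons and one local-access step per needed neighbour, the candidate set $C:=P\cup\{\operatorname{succ}_B(p):p\in P\}\cup\{\operatorname{pred}_B(\min P)\}$, omitting a successor or predecessor that does not exist, so that $|C|\le 2|T|k+1$. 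It enumerates all $\bar v'\in C^\ell$ — at most $(2|T|k+1)^\ell$ tuples — and for each one computes, for every example $(\bar u_i,\lambda_i)$, the atomic type $\operatorname{atp}^B(\bar u_i,\bar v')$ of the at most $k+\ell$ positions involved (constant cost, since $k,\ell$ are fixed), groups the examples by their atomic type (only constantly many occur), and tests whether examples of equal type always carry equal label. For the first $\bar v'$ passing the test it returns $\bar v'$ together with $\phi'(\bar x\smid\bar y'):=\bigvee_{i:\lambda_i=1}\chi_{\operatorname{atp}^B(\bar u_i,\bar v')}(\bar x,\bar y')$, where $\chi_\tau(\bar x,\bar y')$ is the quantifier-free formula (a conjunction of atoms and negated atoms) defining the atomic type $\tau$. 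Then $\llbracket\phi'(\bar x\smid\bar v')\rrbracket^B(\bar u_j)=1$ exactly when $\operatorname{atp}^B(\bar u_j,\bar v')$ coincides with that of some positively labelled example, which by the test happens exactly when $\lambda_j=1$; so $\phi'$ is a consistent $(0,\ell)$-formula, and the running time is $\mathcal O\big((2|T|k+1)^\ell|T|\big)$.

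It then remains to show the search always succeeds. Since $T$ is $\phi$-consistent there is $\bar v\in U(B)^\ell$ with $\llbracket\phi(\bar x\smid\bar v)\rrbracket^B(\bar u_i)=\lambda_i$ for all $i$; replace each $v_j$ by the representative in $C$ of the gap of $P$ containing it — a point of $P$ when $v_j\in P$, the successor of the left endpoint for a bounded interior gap, and $\operatorname{pred}_B(\min P)$ respectively $\operatorname{succ}_B(\max P)$ for the two unbounded gaps. Each such representative lies in $C$ and in the same $P$-gap as $v_j$, so for every $p\in P$ and every $j$ the comparison of $p$ with $v_j$ agrees with that of $p$ with $v'_j$. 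Hence if $\operatorname{atp}^B(\bar u_i,\bar v')=\operatorname{atp}^B(\bar u_{i'},\bar v')$, then in particular $\bar u_i$ and $\bar u_{i'}$ have the same atomic type and, for all $a,j$, $u_{i,a}$ and $u_{i',a}$ lie on the same side of $v'_j$ and therefore of $v_j$; these are precisely the conditions for $\operatorname{atp}^B(\bar u_i,\bar v)=\operatorname{atp}^B(\bar u_{i'},\bar v)$ — the only relations that change when passing from $\bar v$ to $\bar v'$ are parameter-to-parameter comparisons and the labels of the parameters, and none of these enters that equality — whence $\lambda_i=\lambda_{i'}$. So $\bar v'\in C^\ell$ passes the test.

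For the unary case ($k=1$) the $(2|T|+1)^\ell$ factor can be removed. Discard duplicate instance positions ($\phi$-consistency forces equal positions to carry equal labels) and sort the survivors $p_1<\dots<p_N$, $N\le|T|$. For any $\phi$-consistent $\bar v$, $\llbracket\phi(x\smid\bar v)\rrbracket^B$ depends only on the $\Sigma$-label of $x$ and on which of the at most $2\ell+1$ cells carved out by $\bar v$ contains $x$; since these cells are intervals, the $p_i$ fall, in sorted order, into at most $2\ell+1$ consecutive blocks within each of which the label is a function of the alphabet symbol. This property of a set of positions is hereditary, so a single left-to-right greedy scan partitions $p_1,\dots,p_N$ into the minimum number $r\le 2\ell+1$ of such blocks in time $\mathcal O(N)$; taking the right endpoints of the first $r-1$ blocks as parameters $y_1<\dots<y_{r-1}$ (so $r-1\le 2\ell$) and emitting
\[
\phi'(x\smid\bar y')=\bigvee_{m=1}^{r}\Big((y_{m-1}<x\le y_m)\wedge\bigvee_{a\in S_m}R_a(x)\Big),
\]
with $y_0$ read as $-\infty$, $y_r$ as $+\infty$, and $S_m$ the set of symbols that block $m$ labels $1$, yields a hypothesis consistent with $T$. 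Apart from the initial sort of at most $|T|$ positions, the procedure is linear in $|T|$.

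The step I expect to be the main obstacle is the correctness of the reduced search in the general case: isolating precisely which part of $\operatorname{atp}^B(\bar u_i,\bar v)$ must be reproduced by $\bar v'$ (the parameter-to-instance comparisons) and which may be coarsened (parameter-to-parameter order, parameter labels), and treating the unbounded gaps so that $C$ still has at most $2|T|k+1$ elements, each reachable by one local-access step. The remaining ingredients — the greedy argument and the shape of $\phi'$ in the unary case, and the bookkeeping for empty gaps, absent first or last neighbours, and repeated positions or labels — should be routine.
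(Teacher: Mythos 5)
Your argument is correct and follows the same route the paper sketches (the paper itself defers the details to its full version): truth of a quantifier-free formula depends only on the atomic type of the tuple, so it suffices to enumerate a bounded set of candidate parameter positions --- the training positions together with one representative per gap, which is exactly where the bound $(2|T|k+1)^{\ell}$ comes from --- and your observation that only the instance-to-parameter comparisons need to be preserved (so one re-synthesises the formula from the types rather than reusing $\phi$) is precisely the point that makes this a \emph{formula} learner rather than a parameter learner. The one discrepancy is in the unary case: your greedy scan needs the training positions in sorted order, so as written the running time is $\mathcal O(|T|\log|T|)$ rather than the claimed $\mathcal O(|T|)$. This is easily repaired without sorting: in each of the at most $2\ell+1$ rounds compute, for every symbol $a$, the minimal remaining $a$-position of each classification; the smallest position at which some symbol has already exhibited both classifications determines the next block boundary, and each round costs $\mathcal O(|T|)$ with a constant number of rounds. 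Everything else, including the edge cases you list (missing successors or predecessors, empty gaps, duplicate positions), checks out.
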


Note that even in the more general case, the runtime of the algorithm is polynomial in $|T|$ for a fixed $\ell$.
The proof uses the fact that evaluating a quantifier-free formula over an ordered word only depends on the labels and relative positions of the nodes assigned to the free variables. 
The number of possible different configurations therefore only depends on $k$ and $\ell$ which means that all of those can be checked.
For a proof of this theorem we refer to the full version of this article.

In Theorem \ref{thm:unary-existential} we generalize the first part of Theorem~\ref{thm:quantifierFreeSingleInput} to unary existential formulas. 
In contrast to the quantifier-free case, where $\ell$ can be taken from $\phi$ for some $\phi$-consistent training set, the constructed hypothesis uses a relatively long formula.
The formula constructed for the hypothesis is based on the observation that existentially quantified conjunctions can only define an interval per label $a\in\Sigma$.
Theorem \ref{thm:binaryExistentialFormualsNotSublinear} states that this cannot be extended to arbitrary existential formulas. Again, we refer to the full version of the paper for proofs of these theorems.\mgrand{Beweise auch raus und auf full version verwiesen}

\begin{theorem}\label{thm:unary-existential}
  There is a  consistent formula learner for unary existential formulas whose running time is in $\mathcal O(|T|)$ for a training set $T$.
\end{theorem}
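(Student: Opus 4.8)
The plan is to split the problem into a structural fact about unary existential formulas and a construction that uses only positions occurring in $T$; the latter is forced, since under local access the learner cannot obtain any other position of $B$. Write the fixed formula as $\varphi(x\smid\bar y)=\exists\bar z\,\psi(x,\bar y,\bar z)$ with $\psi$ quantifier-free, and put $\psi$ into disjunctive normal form $\psi\equiv\bigvee_{j=1}^N\chi_j$, so that $\varphi\equiv\bigvee_j\exists\bar z\,\chi_j$ with $N=N(\varphi)$. The first step is the structural lemma behind the remark preceding this theorem: for every word $B$, all parameters $\bar v$, and every label $a\in\Sigma$, the set $\llbracket\exists\bar z\,\chi_j(x\smid\bar v)\rrbracket^B\cap R_a$ is $R_a$ intersected with an interval of positions. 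Taking unions over $j$, there is a constant $C=C(\varphi)$ such that for every $B$, every $\bar v$ and every $a$, the set $\llbracket\varphi(x\smid\bar v)\rrbracket^B\cap R_a$ is $R_a$ intersected with a union of at most $C$ intervals.

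From this I extract the only property of a $\varphi$-consistent $T$ that the construction needs. Fix $a\in\Sigma$ and list the positions carrying label $a$ that occur in $T$ as $w_1<\dots<w_p$, with $T$-labels $\mu_1,\dots,\mu_p\in\{0,1\}$ (well defined: a $\varphi$-consistent $T$ cannot give a position both labels). Choosing parameters $\bar v$ witnessing $\varphi$-consistency and applying the structural lemma to the sorted positions $w_1,\dots,w_p$ shows that $\{i:\mu_i=1\}$ is a union of at most $C$ blocks of consecutive indices; equivalently, scanning the $a$-examples by increasing position, the label $1$ occurs in at most $C$ maximal runs.

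The learner then proceeds as follows. It reads the label of every position occurring in $T$ and groups the examples by label (time $\mathcal O(|T|)$, as $|\Sigma|$ is fixed); for each label $a$ it sorts the $a$-group by position and finds its at most $C$ maximal runs of $1$'s; for a run with index interval $[l,r]$ it emits the conjunct $R_a(x)\wedge w_{l-1}<x\wedge x<w_{r+1}$, omitting the first inequality if $l=1$ and the second if $r=p$. The returned formula $\varphi'(x\smid\bar y')$ is the disjunction of all emitted conjuncts; it is quantifier-free, it has at most $2C|\Sigma|$ parameters, all of them positions appearing in $T$, and both quantities depend only on $\varphi$, so it is a $(q',\ell')$-hypothesis with $q'=q$ and $\ell'=\max\{\ell,2C|\Sigma|\}$ after adding dummy parameter variables if needed. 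Consistency is immediate: a $1$-labeled $a$-example lies in some run $[l,r]$, hence strictly between $w_{l-1}$ and $w_{r+1}$, so its conjunct fires; a $0$-labeled $a$-example lies outside every run, hence for each run it is $\le w_{l-1}$ or $\ge w_{r+1}$, so no conjunct fires; examples of other labels are blocked by the guard $R_a(x)$. Every step other than sorting the examples by position is linear in $|T|$.

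The main obstacle is the structural lemma of the first step, that an existentially quantified conjunction of literals defines, for each label, an interval. I would prove it by quantifier elimination in word structures over $\{<,(R_a)_{a\in\Sigma}\}$: eliminating $z_m,\dots,z_1$ in turn rewrites $\exists\bar z\,\chi_j$ as a Boolean combination of order atoms among $x$ and the parameters and of atoms of the form ``some $b$-labeled position lies strictly between $s$ and $t$'', where $s,t$ range over $x$, the parameters, and the two formal endpoints; each such atom defines, within a fixed label class $R_a$, an interval (a ray, a bounded interval, or all of $R_a$), and the point is to check that the disjunctions produced by the elimination collapse so that the whole combination is again one interval per label, rather than merely a bounded union of intervals. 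This is the step the paper defers to the full version, and it is also where the logic's lack of a successor relation is used; the analysis becomes considerably more delicate once a successor relation is included, and Theorem~\ref{thm:binaryExistentialFormualsNotSublinear} shows that even without it the statement cannot be extended to two instance variables.
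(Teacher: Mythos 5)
Your proposal is correct and follows the same route the paper indicates: the paper defers the proof of Theorem~\ref{thm:unary-existential} to its full version, but the one idea it does state --- that existentially quantified conjunctions define an interval per label, and that the returned hypothesis is a comparatively long formula --- is exactly your structural lemma and your run-based quantifier-free hypothesis with parameters drawn from $T$. Two remarks. First, the step you single out as the main obstacle is stronger than what your algorithm actually uses: you only need that $\llbracket\varphi(x\smid\bar v)\rrbracket^B\cap R_a$ is a union of boundedly many intervals, and for that there is no need to verify that the disjunctions produced by quantifier elimination collapse to a single interval per conjunct. Refine each $\chi_j$ into the finitely many complete order types over $\{x\}\cup\bar y\cup\bar z$ consistent with it; for a fixed order type the condition on $x$ is the conjunction of a label constraint, an interval constraint relative to $\bar v$, conditions independent of $x$ (for blocks of $z$'s confined between two parameters), an up-set condition (a fixed label pattern must occur between the anchor just below $x$ and $x$) and a symmetric down-set condition, hence an interval within each $R_a$; the union over order types and over $j$ gives the bound $C(\varphi)$ you need. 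Second, sorting each label group costs $\Theta(|T|\log|T|)$ comparisons, so as written you obtain $\mathcal O(|T|\log|T|)$ rather than the stated $\mathcal O(|T|)$; this does not affect the substance (the time is still independent of $|B|$), but to match the bound exactly you would need to assume $T$ arrives sorted or replace the sort by constantly many linear-time selection passes that locate the at most $C$ run boundaries directly.
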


\begin{theorem}\label{thm:binaryExistentialFormualsNotSublinear}
There is no consistent formula learner for binary existential formulas with one parameter that runs in sublinear time.
\end{theorem}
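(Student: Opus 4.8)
The plan is to mirror the construction of Lemma~\ref{lem:counterexample}, but to replace the ``is in this $a$-block'' device (which required the universal quantifier $\forall z'$) by pairs of instance positions that straddle a single separator. Take the binary formula
\[
\varphi(x_1,x_2\smid y) \;=\; R_a(x_1)\wedge R_a(x_2)\wedge x_1<x_2\wedge \exists z\big(x_1<z<x_2\wedge((R_b(z)\wedge z<y)\vee(R_c(z)\wedge z\ge y))\big),
\]
which is existential (pull $\exists z$ to the front), binary, and has the single parameter $y$. If a string is built from $a$-blocks separated by single $b$'s and $c$'s and $(x_1,x_2)$ is a pair of $a$-positions straddling exactly one separator $z$, then $\varphi(x_1,x_2\smid y)$ holds iff $z$ is a $b$ below $y$ or a $c$ at or above $y$; so such a pair plays exactly the role of a ``flip at $y$'' instance in the unary lemma.

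Given a sublinear $(q',\ell')$-formula learner $\mathcal L$ for $\varphi$, first choose $s$ so that every hypothesis formula (at most $\ell'$ parameters, quantifier rank at most $q'$) has an equivalent DFA with at most $s$ states, and then---using sublinearity exactly as in Lemma~\ref{lem:counterexample}---a number $r$ such that $\mathcal L$ runs in time at most $r$ on strings of length $n:=(2\ell'+3)N(2r+2)$, where we set $A:=a^{2r+1}b$, $A_b:=A^{N}$ with $N$ a suitably large multiple of $s!$ (say $N=2s\cdot s!$), and $A_c$ the analogous block with $c$'s. Put $B_i:=(A_bA_c)^iA_b(A_bA_c)^{\ell'+1-i}$ for $i=0,\dots,\ell'$; these all have length $n$ and the identical sequence of $a$-blocks, differing only in the letters at separator positions. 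For the training set, in each of the $2\ell'+3$ blocks of $B_i$ take the pair consisting of the midpoints of the central $a$-block and the following $a$-block (they straddle one separator), and label the pair coming from the $j$-th block by $1$ if $j$ is even and $0$ otherwise; so $|T|=2\ell'+3$.

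The core computation is to check that, with parameter $v_i$ chosen as the first $b$ of the second of the two consecutive $A_b$'s of $B_i$, the set $T$ is $\varphi$-consistent over $B_i$, and that the labels come out the same for every $i$. Indeed the $j$-th block of $B_i$ is an $A_b$ iff ($j\le 2i$ and $j$ even) or ($j\ge 2i+1$ and $j$ odd); its straddled separator lies below $v_i$ exactly when $j\le 2i$; feeding this into $\varphi$ yields label $[j\text{ even}]$ in every case, matching $T$ independently of $i$. Next, the positions occurring in $T$ are $a$-positions at distance at least $r+1$ from every separator and from $v_i$, so in at most $r$ steps $\mathcal L$ reaches only $a$-positions, on which all $B_i$ agree; hence $\mathcal L$ returns the same hypothesis $H=\llbracket\psi(x_1,x_2\smid\bar v')\rrbracket$---same formula, same (at most $\ell'$) parameter positions $\bar v'$---on every $(B_i,T)$. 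The $\ell'+1$ ``double-$A_b$'' regions (blocks $2i,2i+1$ of $B_i$) are pairwise disjoint, so for some $i^{*}$ none of the positions in $\bar v'$ lies inside the double-$A_b$ region of $B^{*}:=B_{i^{*}}$.

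Finally, let $\mathcal A$ be a DFA with at most $s$ states for $\psi$, reading $\Sigma$-strings annotated with the instance and parameter positions. On $B^{*}$, the annotated words for the training pair $P$ in block $2i^{*}$ and the training pair $P'$ in block $2i^{*}+1$ agree outside the double-$A_b$ region $A^{2N}$ (same base string, same parameter markers, no instance markers outside), while inside they are $A^{N/2-1}\hat A^{(1)}\hat A^{(2)}A^{3N/2-1}$ and $A^{3N/2-1}\hat A^{(1)}\hat A^{(2)}A^{N/2-1}$, where $\hat A^{(1)},\hat A^{(2)}$ denote $A$ with its middle $a$ marked as $x_1$, resp.\ $x_2$. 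Since $s!$ is a multiple of every cycle length of $\mathcal A$ on the block $A$ and $N/2-1\ge s$, the state reached before $\hat A^{(1)}\hat A^{(2)}$ and the state reached after the whole region coincide in the two runs; with identical prefixes and suffixes, $\mathcal A$ accepts both annotated words or neither, so $H$ assigns $P$ and $P'$ the same label although $T$ labels them $1$ and $0$. Thus $H$ is inconsistent with the $\varphi$-consistent training set $T$ over $B^{*}$, contradicting consistency of $\mathcal L$. The one genuinely new point over Lemma~\ref{lem:counterexample} is the choice of $\varphi$ together with the training pairs: restricting to pairs straddling exactly one separator is what prevents the existential witness $z$ from being supplied by some unintended separator and makes the consistency case analysis collapse to the clean, $i$-independent value $[j\text{ even}]$; everything else is the two-marker version of the pumping argument already used there, and I expect that bookkeeping (exact block multiplicities so every pumped exponent stays $\ge s$) to be the only fiddly part.
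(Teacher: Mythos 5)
The paper itself does not include a proof of Theorem~\ref{thm:binaryExistentialFormualsNotSublinear} (it is deferred to the full version), so there is nothing to compare against line by line; judged on its own, your argument is correct and is the natural adaptation of the proof of Lemma~\ref{lem:counterexample}. The one genuinely new ingredient --- replacing the $\forall z'$ ``same $a$-block'' device by a pair $(x_1,x_2)$ of instance positions straddling exactly one separator, so that the only possible existential witness $z$ is that separator --- is exactly what is needed to bring the construction into the existential fragment while preserving the ``flip at $y$'' semantics, and your case analysis (label $=[j\text{ even}]$ for every $B_i$ under parameter $v_i$) checks out. The remaining steps (indistinguishability of the $B_i$ within radius $r$ of the training positions, pigeonholing the at most $\ell'$ output parameters against the $\ell'+1$ disjoint double-$A_b$ regions, and the DFA pumping with exponents congruent mod $s!$ and at least $s$) are faithful transplants of the unary argument with two instance markers instead of one; with $N=2s\cdot s!$ the exponents $N/2-1$ and $3N/2-1$ indeed satisfy the needed bounds, so the bookkeeping you flag as fiddly does go through.
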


\newcommand{\blank}{?}
\newcommand{\tree}{\mathcal{T}}
\newcommand{\Mphi}{\hat{M}}
\newcommand{\Fphi}{\hat{F}}
\newcommand{\hphi}{\hat{h}}

\section{Indexing} \label{sec:preprocessing}

While it is impossible to learn the parameters of a fixed formula in sublinear time, we now consider the case that the learning algorithm can preprocess and index the underlying string before it enters the learning phase for this string.
In this setting, the learning algorithm consists of two phases, starting with a linear time (in $|B|$) indexing phase, in which the algorithm can add an auxiliary data structure to $B$.
The learning phase (sublinear in $|B|$) then can use this auxiliary data structure to compute consistent parameters $\bar v$ for a given training set $T$.
We refer to the running times of these two phases as \emph{indexing time} and \emph{learning time}, respectively.

The main result in this section is about the learnability of unary MSO formulas in the indexing model, as stated in the next theorem. Most of this section is devoted to the proof of this theorem. At the end of the section we briefly mention the case of MSO formulas with higher arity.
\begin{theorem} \label{the:precomputation-unary}
There is a consistent parameter learner for unary MSO formulas with indexing time $O(|B|)$ for a string $B$ and learning time $\mathcal O(|T|)$ for a training set $T$ over $B$. 
\end{theorem}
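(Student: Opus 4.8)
The plan is to exploit the classical translation from MSO to finite automata together with Simon's factorization forest theorem. Fix the unary formula $\varphi(x\smid y_1,\dots,y_\ell)$ of quantifier rank $q$. First I would turn $\varphi$ into a deterministic finite automaton $\mathcal A$ over the extended alphabet $\Gamma=\Sigma\times\{0,1\}^{\ell+1}$, whose extra bits mark the position assigned to the instance variable $x$ and to the parameters $y_1,\dots,y_\ell$, chosen so that for a word $B$ of length $n$ and positions $u,v_1,\dots,v_\ell$ the automaton accepts the $\Gamma$-word encoding $(B,u,\bar v)$ if and only if $B\models\varphi(u\smid\bar v)$. Let $M=M(\varphi)$ be the transition monoid of $\mathcal A$. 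Reading a marker-free factor $B[i..j]$ induces an element $h(B[i..j])\in M$, and $h$ is a morphism from $\Sigma^*$ to $M$. Since $q,\ell$ and $\Sigma$ are fixed, $|M|$ is a constant, so in the data-complexity view everything depending only on $M$ is $O(1)$.

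For the indexing phase I would compute, in time $O(n)$ (with $|M|$ treated as a constant), a Simon factorization forest $\mathcal F$ for $B$ with respect to $h$ \citep{Simon90,Kufleitner08}, of height $O(|M|)$, storing at each node the $M$-value of the factor it spans and the usual auxiliary arrays for lowest-common-ancestor queries. This supports, for any $1\le i\le j\le n$, computing $h(B[i..j])$ in time $O(\mathrm{height}(\mathcal F))=O(1)$: the factor decomposes into $O(|M|)$ canonical pieces along the two spines from the LCA of its endpoints, each piece being a single child of a binary node or a block of consecutive children of an idempotent node (whose value is that idempotent). From $h$-values of factors one also gets, in $O(1)$, the state transformation of $\mathcal A$ over any factor carrying a bounded number of markers, by splitting at the marked positions and inserting the corresponding $\Gamma$-letters; in particular $\varphi$ can be evaluated on any configuration induced by a bounded set of positions in $O(1)$.

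In the learning phase, given $T=\{(u_1,\lambda_1),\dots,(u_t,\lambda_t)\}$, I would first sort the instance positions, $u_{\pi(1)}<\cdots<u_{\pi(t)}$, which cuts $B$ into the $t$ singletons $\{u_i\}$ and the $t+1$ open gaps between them. Any tuple $\bar v$ is described, up to what matters for $\varphi$, by a \emph{pattern}: into which gap or singleton each $v_j$ falls, the left-to-right order of the parameters inside each gap, and the $\Sigma$-labels of the parameters. By a Feferman--Vaught-style composition argument phrased inside $M$, whether $\bar v$ is consistent with $\varphi,B,T$ depends only on its pattern together with the $M$-values of the at most $\ell+t+1$ maximal factors cut out by the special positions, and the factors strictly inside a gap are the only ``free'' ones. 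For a fixed pattern I would use $\mathcal F$ to decide, gap by gap, whether positions of the prescribed labels can be placed inside the gap so as to realize a tuple of factor-$M$-values making all $t$ runs of $\mathcal A$ (one per $u_i$, sharing the same parameter markers) end as prescribed by $\lambda_1,\dots,\lambda_t$; this is a reachability question in a fixed product automaton, whose basic primitive (does the factor $B[i..j]$ contain a $\sigma$-labelled position $p$ with $h(B[i..p-1])=m_1$ and $h(B[p+1..j])=m_2$?) is answered in $O(1)$ from $\mathcal F$, and chaining $\ell=O(1)$ such queries handles a whole gap.

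To obtain learning time linear in $t$ rather than $t^{O(\ell)}$, I would not enumerate patterns explicitly but run the gap-by-gap decision as a left-to-right dynamic program over $u_{\pi(1)},\dots,u_{\pi(t)}$, whose state records, for the $O(1)$-many combinatorial configurations of not-yet-placed parameters, the joint state of the $t$ partial $\mathcal A$-runs; each of the $t$ steps costs $O(1)$, and a consistent $\bar v$ is read off from an accepting DP path. The main obstacle, where the real work lies, is exactly this last point: showing that the interaction of the $\ell$ shared parameters with the $t$ separate acceptance conditions can be resolved by a bounded-state sweep (so that the learning time is genuinely $O(|T|)$ and not merely polynomial in $|T|$), and verifying that the factorization-forest index supports the ``$\exists$ a position of given label splitting a factor into given $M$-values'' primitive in constant time; once these are in place, outputting the parameters and double-checking consistency via the $O(1)$-time evaluation of $\varphi$ is routine.
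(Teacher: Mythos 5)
You have assembled the right toolbox (MSO-to-monoid translation, a Simon factorization forest as the $O(|B|)$-time index, composition of factor values), but the learning phase as described has a genuine gap, and it is exactly at the two points you yourself flag as ``where the real work lies.'' First, the sweep whose state records ``the joint state of the $t$ partial $\mathcal A$-runs'' is not a bounded-state dynamic program: that state lives in $Q^{t}$, and since the $\ell$ parameters are shared by all $t$ runs, a placement decision in one gap constrains all $t$ acceptance conditions simultaneously; deciding whether the remaining parameters can be placed so that \emph{every} run ends as prescribed by its label is a reachability question in a product automaton whose size grows with $t$, not in a fixed one. Nothing in the proposal reduces this to constant-size information per step. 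Second, the primitive ``does $B[i..j]$ contain a $\sigma$-labelled position splitting it into prescribed $M$-values'' is not answerable from a factorization forest over the plain transition monoid: it asks for the \emph{set} of values achievable over all annotations of the factor, which must itself be precomputed at every forest node.

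The paper resolves both issues by a construction your proposal is missing. It folds the training labels into the alphabet (working over $\Gamma=\Sigma\times\{?,0,1\}$), so that ``consistent with all of $T$'' becomes membership of a \emph{single} annotated string in a \emph{single} regular language $\hat L(\varphi)$ --- one acceptance condition on one monoid element, with no product over the $t$ examples; the training set is spliced into the precomputed tree in time $O(\mathit{height}\cdot|T|)$. And it builds the factorization forest not over the transition monoid but over the powerset monoid $\mathcal M=2^{\hat M}$ of values achievable under all possible parameter annotations, so that your existential splitting queries are precomputed at every node; a structural claim about idempotent nodes (every achievable element factors as $m_1 e m_2$ through the unique parameter-free idempotent $e$ of that node, with $m_1,m_2$ achievable in single children) then lets a single top-down traversal synthesize the $\ell$ parameter positions along at most $\ell$ root-to-leaf paths. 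Without these two ingredients --- or substitutes for them --- your argument does not yet yield learning time $O(|T|)$, nor even an obviously polynomial bound, because the bounded-state sweep is asserted rather than established.
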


As an example, consider the formula used in the proof of Lemma~\ref{lem:counterexample}. The indexing phase could annotate each position with the information whether it is in an $a$-block preceded by $b$ or by $c$. With this additional information it is easy to find a consistent parameter setting for a given training set.

For this specific example, it is sufficient to simply annotate each position in $B$ with some extra information. In the general case, the auxiliary data structure is a tree whose leafs are the positions of $B$, and each node contains information about the substring of $B$ in the subtree below this node. More formally, we use a factorization tree of the string w.r.t.~some finite monoid. We start by introducing monoids and factorization trees, and then define the specific monoid that we use in the learning algorithm. We refer the reader to \cite{Colcombet11} and \cite{Bojanczyk12} for some recent expositions that explain the connections between monoids and regular languages in more detail.

A \emph{monoid} $(M,\cdot,1_M)$ consists of a set $M$, an associative multiplication operation $\cdot$ on $M$, and a neutral element $1_M$ for this operation. Often we simply write $M$ for the monoid, and write the multiplication of two elements $m_1$ and $m_2$ as $m_1m_2$. The set of all finite words over an alphabet $\Sigma$ with concatenation as multiplication and the empty word as neutral element is called the free monoid (generated by $\Sigma$). A mapping $h:M\rightarrow M'$ for monoids $M$ and $M'$ is called a monoid morphism (just morphism for short) if 
$h(m_1m_2) = h(m_1)h(m_2)$ for all $m_1,m_2 \in M$, and $h(1_M) = 1_{M'}$.

For a finite monoid $M$, a morphism $h: \Sigma^* \rightarrow M$, and a subset $F \subseteq M$ of accepting monoid elements, we define the language $L(M,h,F) = \{A \in \Sigma^* \mid h(A) \in F\}$. A well known theorem implicitly mentioned in early papers of automata theory by \cite{RabinS59} states that a language is regular if, and only if, it can be accepted by a finite monoid in this way.%

We now turn to factorization trees. These can be seen as index structures for finite words. 
Our techniques are based on the same ideas as the ones described in \cite{Bojanczyk12}.

Let $M$ be a finite monoid, and let $s = m_1,m_2,\ldots,m_n$ be a sequence of elements from $M$. A \emph{factorization tree} $\tree$ of $s$ is a finite ordered tree (the successors of a node are ordered) whose nodes $v$ are labeled by elements $\tree(v)$ of $M$, such that
\begin{itemize}\itemsep-2pt
\item the sequence of leaf labels is $s$,
\item for each inner node $v$ with children $v_1, \ldots, v_i$, the label of $v$ is the product of the monoid elements at its children: $\tree(v) = \tree(v_1) \cdot \tree(v_2) \cdots \tree(v_i)$
\end{itemize}
Note that each node $v$ of $\tree$ defines an infix (factor) $m_j,...,m_{j'}$ of $s$ corresponding to the leafs in the subtree below $v$. The label $\tree(v)$ of $v$ is the product $m_j...m_{j'}$ of these elements.

For example, one can use a binary tree, whose height is then logarithmic in the length of $s$. We use a class of factorizations introduced by \cite{Simon90} that also can have nodes of higher arity with a specific property. 

A \emph{Simon factorization tree} $\tree$ of $s$ is a factorization tree with the following additional property:
\begin{itemize}\itemsep-2pt
\item if a node $v$ of $\tree$ has more than two children $v_1, \ldots, v_i$, then the labels of all the children are the same, and this label $e$ is an \emph{idempotent} element of $M$, that is $ee = e$ (it follows that $\tree(v) = e$, too).
\end{itemize}
We refer to such nodes as \emph{idempotent nodes}.

The following theorem is due to \cite{Simon90}. For the bound of $3|M|$ see \cite{Colcombet11,Kufleitner08}.
\begin{theorem}[Simon factorization theorem]\label{thm:simonFactorization}
For every sequence $s = [m_1,m_2,\dots,m_n]$ of monoid elements from $M$, there is a factorization tree of height at most $3|M|$.
This factorization tree can be computed in time $\text{poly}(|M|)\cdot n$.
\end{theorem}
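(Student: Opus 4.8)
The plan is to prove the existence statement through the structure theory of finite monoids — Green's relations and the $\mathcal J$-preorder — and to read the algorithm off the same construction. Recall that for $a,b\in M$ one writes $a\leq_{\mathcal J}b$ if $MaM\subseteq MbM$, and that $\mathcal R,\mathcal L,\mathcal H=\mathcal R\cap\mathcal L$ are the usual Green's relations. The two facts I would lean on are the \emph{stability} of finite monoids (if $ab\mathrel{\mathcal J}a$ then $ab\mathrel{\mathcal R}a$, and dually for $\mathcal L$) and the fact that an $\mathcal H$-class is a group exactly when it contains an idempotent. The proof is an induction whose controlling quantity is how far the partial products of $s$ descend in the $\mathcal J$-order; since distinct $\mathcal J$-classes number at most $|M|$, such descents are bounded, and the genuine difficulty is confined to stretches where the partial products do not descend.

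Concretely, I would first record that the left partial products $p_i=m_1\cdots m_i$ satisfy $p_{i+1}=p_im_{i+1}\leq_{\mathcal J}p_i$, so the $\mathcal J$-class of $p_i$ is non-increasing along $s$. I would cut $s$ at the at most $|M|-1$ positions where this $\mathcal J$-class strictly drops, combine the resulting blocks by a balanced binary tree of height $O(\log|M|)$, and recurse into each block. Inside a block every partial product is $\mathcal J$-equivalent, so by stability they all lie in a single $\mathcal R$-class $R$. This reduction to the \emph{stable case} is purely bookkeeping.

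The heart of the argument is the stable case. Within $R$ I would further split the block into maximal runs on which the $\mathcal H$-class of the partial product is constant; the number of $\mathcal H$-classes meeting $R$ is at most the number of $\mathcal L$-classes of the $\mathcal J$-class, again at most $|M|$. On such a run the partial products live in one $\mathcal H$-class $H$, and right multiplication by the increments maps $H$ bijectively to itself, so by Green's lemma $H$ is a group with idempotent $e$. Here I would use the group structure to chop the run into consecutive factors whose products all equal the \emph{same} idempotent $e$, making a single node with these factors as children a legitimate idempotent node labeled $e$, and then recurse into each factor within a proper sub-$\mathcal J$-class structure. This is the step I expect to be the main obstacle: producing children genuinely equal to one fixed idempotent (rather than merely lying in the same group) requires the Rees/egg-box description of a regular $\mathcal J$-class together with the Schützenberger-group action, and it is precisely this step that fixes the height constant.

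Finally I would tally the height: each descent in the $\mathcal J$-, $\mathcal R$-, and $\mathcal L$-hierarchies contributes an additive constant, and since each hierarchy has length at most $|M|$, the total height is $O(|M|)$; the sharp bound $3|M|$ needs the refined induction of Colcombet and Kufleitner, which I would cite rather than reprove. For the complexity claim I would observe that the construction is a bounded number of left-to-right passes over $s$ — computing partial products, detecting $\mathcal J$-, $\mathcal R$-, and $\mathcal H$-class changes, and emitting nodes — each costing $\mathrm{poly}(|M|)$ per position after a one-time $\mathrm{poly}(|M|)$ precomputation of the Green's relations and idempotents of $M$. Since the recursion strictly decreases the $\mathcal J$-depth and the number of passes is bounded in terms of $|M|$ alone, the whole tree is produced in time $\mathrm{poly}(|M|)\cdot n$, as claimed.
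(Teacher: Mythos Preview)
The paper does not prove this theorem; it is quoted from Simon, with the $3|M|$ bound credited to Colcombet and Kufleitner. So there is no proof in the paper to compare against---the result is simply imported.

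Your outline follows the Green's-relations approach of those cited references, and the global architecture (induct on $\mathcal J$-depth, isolate a stable case, manufacture idempotent nodes from group $\mathcal H$-classes) is the right one. Two steps in your handling of the stable case are wrong as stated, though. First, the number of maximal runs on which the $\mathcal H$-class of the partial product stays constant is not bounded by the number of $\mathcal H$-classes in $R$: the $\mathcal L$-class of $p_i$ can oscillate arbitrarily within a fixed $\mathcal R$-class, so such a block can have $\Theta(n)$ runs. Second, the inference ``right multiplication by the increments maps $H$ bijectively to itself, so by Green's lemma $H$ is a group'' fails: take a Rees matrix semigroup over the trivial group with a sandwich matrix containing a zero entry, and you get $a,ab$ in a common singleton $\mathcal H$-class that is not a group. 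The actual treatment of the stable case in Colcombet and Kufleitner does not use a run decomposition; it exploits that in a $\mathcal J$-smooth block every product $m_i\cdots m_j$ lies in $R_{m_i}\cap L_{m_j}$ and works with the Sch\"utzenberger group of the $\mathcal J$-class. You already flag this step as the main obstacle and defer the sharp constant to the literature, so the errors sit in territory you were hand-waving anyway; but the specific claims you make there do not hold, and your recursion as written would not terminate with bounded height.
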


Factorization trees can also be applied for strings over an alphabet $\Sigma$ (instead of a sequence of monoid elements), given a monoid morphism $h: \Sigma^* \rightarrow M$. A \emph{Simon $h$-factorization tree} for a string $B = a_1 \cdots a_n \in \Sigma^*$ is a Simon factorization tree for the sequence $[h(a_1),\ldots,h(a_n)]$.

We now turn to the monoid that we use for building a factorization tree in the indexing phase of the learning algorithm. We actually define two monoids, where the second one is used for the factorization. Its elements consist of sets of elements of the first monoid that we define. 

In the following, let $\varphi(x\smid\bar{y})$ be a unary MSO formula with parameter variables $y_1, \ldots, y_\ell$. The formula $\varphi(x\smid\bar{y})$ naturally defines a set of strings $\hat{L}(\varphi)$ over the alphabet $\hat \Sigma = \Sigma \times 2^{\{y_1,\dots, y_\ell \}} \times  \{ \blank,0,1 \}$. The first component of a string $\hat B \in \hat{\Sigma}^*$ defines a string $B$ over $\Sigma$. The third component encodes a training set $T_{\hat{B}}$, where $\blank$ indicates that the position is not in $T_{\hat{B}}$, and $0,1$ correspond to the classification of the position in $T_{\hat{B}}$. The second component is supposed to encode the parameter setting. We say that a string $\hat B\in\hat\Sigma^*$ \emph{contains} $y_i$ if there is a position $v$ labeled by $(a,Y,b)\in\hat \Sigma$ such that $y_i\in Y$, and we say that that  $\hat B$ contains $y_i$ \emph{exactly once} if there is exactly one such position. If $\hat B$ contains each $y_i$ exactly once then the second component encodes a valid parameter setting $\bar v=(v_1,\ldots,v_\ell)$.
\mgrand{re-phrased}

Then we let $\hat{L}(\varphi)$ be the set of all strings $\hat B\in\hat\Sigma^*$ such that $\hat B$ contains each $y_i$ exactly once, yielding a parameter setting $\bar v$, and if $B\in\Sigma^*$ is the projection of $\hat B$ to the first component and  $T_{\hat{B}}$ is the training set encoded by the third component, then $\llbracket\phi(x\smid\bar v)\rrbracket^B$ is consistent with $T_{\hat{B}}$.
\mgrand{re-phrased}

It is not difficult to see that a finite automaton for the formula $\varphi(x\smid\bar{y})$ can be modified to obtain a finite automaton for the language $\hat{L}(\varphi)$, which means that $\hat{L}(\varphi)$ can also be accepted by a finite monoid.
Let $\Mphi$ be a finite monoid, $\hphi: \hat{\Sigma} \rightarrow \Mphi$ be a monoid morphism, and $\Fphi \subseteq \Mphi$ such that $(\Mphi,\hphi,\Fphi)$ accepts $\hat{L}(\varphi)$.

\mgrand{moved here from the proof of Lemma 11}
For every subset $K\in2^{\{y_1,\dots,y_\ell\}}$ of the parameters, we let
\[
\Mphi_K = \{\hat h(A)
 \mid  A \in \hat{\Sigma}^* \text{ contains all }y_i \in K \text{ exactly once but does not contain any } y_i \not\in K\},
\]
and we let
\[
\Mphi_\bot = \{\hat h(A)
 \mid  A \in \hat{\Sigma}^* \text{ contains some $y_i$ more than once}\}.
\]
Without loss of generality we may assume that the sets $\Mphi_K$ for $K \in2^{\{y_1,\dots,y_\ell\}}\times\{\bot\}$ are mutually disjoint. It is easy to see this, the idea is that we can introduce copies $m_K$ of all elements $m$ and adjust the homomorphism $\hat h$ accordingly. 

Then $\Mphi$ is the disjoint union of the sets $\Mphi_K$ for $K\in2^{\{y_1,\dots,y_\ell\}}\cup\{\bot\}$. Observe that $\Fphi \subseteq \Mphi_{\{y_1,\dots,y_\ell\}}$ and that no substring of a string in $\Fphi$ is contained in  $\Mphi_\bot$.

We now define a second monoid $\mathcal M$, which is used for the factorizations.
The monoid $\Mphi$ contains information about the parameters as encoded in the strings. In the learning setting, these parameters are unknown and we need to synthesize parameters consistent with the training set. We therefore introduce a monoid that contains information about all possible parameter settings that could be encoded in the strings.

For this purpose, let $\Gamma = \Sigma \times \{ \blank ,0,1 \}$ be the alphabet without the component for the parameters. A string over $\Gamma$ encodes a string over $\Sigma$ together with a training set over $B$.

Let $f:\hat\Sigma^* \rightarrow \Gamma^*$ be the function that projects strings over $\hat{\Sigma}$ to the corresponding strings over $\Gamma$, removing the parameter component of $\hat \Sigma$.
Based on this projection, each string $A$ over $\Gamma$ defines a set $h(A)$ of elements of $\Mphi$ by
\[
h(A) =  \{ m\in \Mphi \mid \exists \hat{B}\in \hat\Sigma^* \mbox{ with } f(\hat{B}) = A \mbox{ and } \hphi(A) = m\}. 
\]
This defines a morphism $h: \Gamma^* \rightarrow \mathcal M$ using a new monoid structure $\mathcal M = \{ S ~|~ S\subseteq \Mphi \}$ with neutral element $1_\mathcal M = \{1_{\Mphi}\}$, the set containing only the neutral element of $\Mphi$, and multiplication $S_1 \cdot S_2 = \{ m_1 \cdot m_2 ~|~ m_1 \in S_1, m_2 \in S_2\}$.

In the following, we denote by $B_T \in \Gamma^*$ the string that encodes $B \in \Sigma^*$ with training set $T$ over $B$. In particular, $B_\emptyset$ denotes this string for the empty training set (so $B_\emptyset$ is $B$ extended with $\blank$ at every position).

The next lemma states that we can compute parameters that are consistent with a given training set $T$ over a string $B$, based on a Simon $h$-factorization of $B_T$.

\begin{lemma} \label{lem:compute-parameters}
Let $B \in \Sigma^*$ and let $T$ be a $\varphi$-consistent training set over $B$. Given a Simon $h$-factorization tree $\tree$ of $B_T$, one can compute in linear time in the height of $\tree$ a set of parameters over $B$ such that $T$ is consistent with the parameters.
\end{lemma}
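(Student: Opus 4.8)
The plan is to walk the Simon $h$-factorization tree $\tree$ of $B_\emptyset$... wait, of $B_T$, from the root downward, maintaining at each step a node $v$ together with a target element $m \in \Mphi$ with the invariant that (i) $m \in \Fphi$-reachability is preserved, i.e. there is a string $\hat B$ over $\hat\Sigma$ with $f(\hat B)$ equal to the factor of $B_T$ at $v$, with $\hphi(\hat B) = m$, and with $\hphi$ of the whole annotated string lying in $\Fphi$; and (ii) $\hat B$ assigns exactly those parameters $y_i$ that still need to be placed inside the factor at $v$. Since $T$ is $\varphi$-consistent there is \emph{some} globally consistent annotation $\hat B^\ast$ of $B_T$ with $\hphi(\hat B^\ast) \in \Fphi$; by definition of $h$ this means $h(B_T)$ contains an element of $\Fphi$, so we can start at the root with any $m_{\mathrm{root}} \in h(B_T) \cap \Fphi$, and the whole set $\{y_1,\dots,y_\ell\}$ of parameters still to be placed.

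The descent step: at an inner node $v$ with children $v_1,\dots,v_i$ and current target $m$, we need to write $m = m_1 \cdots m_i$ with $m_j \in h(\text{factor at }v_j)$, and then recurse into each child $v_j$ with target $m_j$. For a binary node this is a constant-size search over $h(v_1)\times h(v_2)$ (both are subsets of the fixed finite monoid $\Mphi$), so it costs $O(1)$. The decomposition is guaranteed to exist because $h$ is a morphism into $\mathcal M$ and $\tree(v) = \tree(v_1)\cdots\tree(v_i)$ in $\mathcal M$, hence the element of $h(v)$ we are tracking factors accordingly — more precisely, $m \in \tree(v) = \tree(v_1)\cdot\tree(v_2)\cdots\tree(v_i)$ (product in $\mathcal M$), so $m = m_1\cdots m_i$ for suitable $m_j \in \tree(v_j) = h(\text{factor at }v_j)$. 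When we reach a leaf $v$ labeled by the $\Gamma$-letter $(a,b)$ with target $m \in \Mphi$, we have $m = \hphi(a,Y,b)$ for exactly one $Y \subseteq \{y_1,\dots,y_\ell\}$ (using the disjointness of the $\Mphi_K$, which pins down $Y$ from $m$), and we output "position corresponding to $v$ carries parameters $Y$". The correctness of the output follows by reading the invariant at the leaves: the parameter positions we emit, together with $B$ and $T$, form an annotated string $\hat B$ with $f(\hat B) = B_T$ and $\hphi(\hat B) = m_{\mathrm{root}} \in \Fphi$, which is exactly the statement that $\llbracket\varphi(x\smid\bar v)\rrbracket^B$ is consistent with $T$; the invariant (ii), propagated via the $\Mphi_K$-block structure, ensures that each $y_i$ is emitted exactly once.

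The main obstacle — and the place where the Simon (as opposed to merely binary) structure is essential — is the idempotent nodes: a node $v$ with $i > 2$ children all labeled by the same idempotent $e \in \mathcal M$, and with current target $m \in e$ (note $e\cdot e = e$ so $\tree(v)=e$). Here we cannot afford to branch into all $i$ children with individually chosen targets while keeping the running time $O(\operatorname{height})$ rather than $O(|B|)$, and more importantly we must still place all the as-yet-unplaced parameters, possibly spread across several of the $i$ children. The resolution is that $e \in \mathcal M$ is idempotent, so $e = e\cdot e = e\cdot e\cdot e = \cdots$; writing $e$ as a product of copies of itself in $\mathcal M$ and unfolding the definition of $\mathcal M$-multiplication, one shows that for any $m \in e$ there is a way to split the parameter set and choose targets $m_1,\dots,m_i \in e \subseteq \Mphi$ for the children with $m = m_1\cdots m_i$ such that at most $|\{y_1,\dots,y_\ell\}|\le\ell$ of the children get a "nontrivial" target (one lying in some $\Mphi_K$ with $K\ne\emptyset$) and all the others get the same "parameter-free" target $m_0 \in e\cap\Mphi_\emptyset$; concretely, one picks an idempotent witness and uses that $\Mphi_\emptyset$-elements act as the identity on the parameter-bookkeeping. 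Since $\ell$ is a fixed constant, only constantly many children are descended into recursively, and the remaining children can be disposed of in $O(1)$ total (they all get the same verdict "no parameter here"), so the whole descent touches only $O(\operatorname{height}(\tree))$ tree nodes and runs in linear time in the height. I would spell this idempotent-splitting argument out carefully as the one genuinely non-routine step; everything else is bookkeeping with the finite monoid $\Mphi$ and the morphism $h$.
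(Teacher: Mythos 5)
Your proposal is correct and follows essentially the same route as the paper: pick an accepting element of the root label, push a target monoid element down the factorization tree by factoring it over the children's labels, read off the parameter positions at the leaves via the $\Mphi_K$-grading, and use idempotency to avoid branching into more than constantly many children of a wide node. The one step you defer --- the idempotent splitting --- is exactly the paper's key Claim, which it proves in the sharper form $m = m_1 e m_2$ with $m_1,m_2$ in the node label $S$ and $e$ the unique element of $S\cap\Mphi_\emptyset$ (by induction on the set of parameters carried by $m$); adopting that form, so that only the \emph{first} and \emph{last} child ever receive a nontrivial target, makes the constant-time search at an idempotent node immediate, whereas letting the $\le\ell$ nontrivial factors sit at arbitrary positions among unboundedly many identically-labeled siblings leaves you the extra (solvable but fiddly) task of locating a valid placement in $O(1)$.
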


\begin{proof}
A procedure for computing a consistent parameter setting as claimed in Lemma~\ref{lem:compute-parameters} is shown as
Algorithm~\ref{alg:traversalBinaryTreeUnaryRelation}. The idea and notations are explained below.

\begin{algorithm}[ht]
\LinesNumbered
\KwIn{Simon factorization tree $\tree$}
\KwOut{A consistent parameter setting $(y_1,\dots,y_\ell)$}
 $v \gets \text{root}(\tree)$

 Pick $m_{\textup{root}} \in \tree(v) \cap \Fphi$ 
 \tcp*[f]{an accepting element in the label of $v$}

 $\text{push}(m_{\textup{root}},v)$\;

\While {not empty stack}{
	 $(m,v) \gets \text{pop}()$\;

	 \eIf {$\text{leaf}(v)$}{ \tcp*[f] {set the parameters traced to this leaf}

         Let $K \subseteq \{y_1, \ldots, y_\ell\}$ be such that $m \in \Mphi_K$\;

         Set $y_i \gets v$ for each $y_i \in K$\;
     }
        {\tcp*[f]{descend further down the tree}

        Let $v_1$ be the first and $v_2$ be the last child of $v$\;
      
        \If {$v$ has two children}{
	            Pick $m_1 \in \tree(v_1)$, $m_2 \in \tree(v_2)$ with $m = m_1 m_2$\;
        }
        \If {$v$ has more than two children}{
            Let $e$ be the unique idempotent element in $\Mphi_\emptyset\cap\tree(v)$\;
	    
            Pick $m_1,m_2 \in \tree(v)$ with $m = m_1 e m_2$\;

\tcp*[f]{See Claim in the proof of Lemma~\ref{lem:compute-parameters}}
        }
            \lIf {$m_1 \not\in M_\emptyset$} {$\text{push}(m_1,v_1)$}
		\lIf {$m_2 \not\in M_\emptyset$} {$\text{push}(m_2,v_2)$}
	}
}
\Return{$\hat y = (y_1,\dots,y_\ell )$}\;

\caption{Computing a consistent parameter setting from a factorization tree}\label{alg:traversalBinaryTreeUnaryRelation}
\end{algorithm}
The variable $v$ is used for nodes of $\tree$, and $m$ for monoid elements of $\Mphi$. Recall that the nodes of $\tree$ are labeled with elements from $\mathcal{M}$, which are sets of elements of $\Mphi$.

The algorithm starts in the root of $\tree$, and picks some accepting element $m_{\textup{root}}$ in the label of the root. Such an accepting element exists, since we assume that $T$ is a $\varphi$-consistent training set. Thus, there is a parameter setting that is consistent with $T$. Adding this parameter setting to $B_T$ yields a string $\hat{B}_T \in \hat{\Sigma}^*$. Then $m_{\textup{root}} = \hphi(\hat{B}_T)$ is accepting and it is contained in the label of the root of $\tree$.

The algorithm then descends down the tree to find parameter positions that generate the accepting element chosen at the root.
It uses a stack because it has to descend on several paths (to find a position for each parameter).

\mgrand{Hier angepasst, weil einige Definition schon oben stehen}
Recall that $(\Mphi,\hphi,\Fphi)$ accepts $\hat{L}(\varphi)$, that
$\Mphi$ is the disjoint union of the sets $\Mphi_\bot$ and $\Mphi_K$ for $K\in2^{\{y_1,\dots,y_\ell\}}$, that the set $\Fphi$ of accepting elements is contained in $\Mphi_{\{y_1,\ldots,y_\ell\}}$, and that no string in $\Fphi$ has a substring in $\Mphi_\bot$. As all elements $m\in\Mphi$ the algorithm visits (and pushes to the stack in lines 3 and 18) are substrings of $m_{\textup{root}}\in\Fphi$, no such $m$ is an element of $\Mphi_\bot$.

In the main loop, Algorithm~\ref{alg:traversalBinaryTreeUnaryRelation} traces monoid elements that are not in $\Mphi_\emptyset$ further down the tree. Note that the elements $m\in \Mphi_\emptyset$ correspond to words that do not contain a parameter.

The algorithm pops the next pair $(m,v)$ with $m \in \Mphi$ and $v$ a node of $\tree$ from the stack. If $v$ is a leaf, then there is a set $K$ such that $M \in \Mphi_K$ and $K \not= \emptyset$ because elements from $M_\emptyset$ are never pushed onto the stack. The leaf $v$ corresponds to a position in the string $B$. This position is the value for the parameters in $K$.

If $v$ is an inner node, then $m$ can be written as product of $\Mphi$ elements in the labels at the children of $v$. If $v$ has only two children $v_1$ and $v_2$, then the algorithm can simply pick elements $m_1$ and $m_2$ in the labels of $v_1$ and $v_2$ whose product is $m$. If $v$ has more than two children, then it is an idempotent node. The choices made by the algorithm in this case are based on the following claim.
Intuitively, this claim shows that for finding a consistent parameter setting, it is sufficient to consider the first and the last child of idempotent nodes.

\bigskip\noindent
\textit{Claim:}
Let $S \subset \Mphi$ be the label of an idempotent node. Then $S$ contains a unique idempotent element $e \in M_\emptyset$, and each element $m$ of $S$ can be written as a product $m = m_1 e m_2$ with $m_1,m_2 \in S$.\mgrand{Es gibt auch idempotente Elemente in $\Mphi_{\bot}$. Deswegen habe ich die Formulierung hier, den Beweis, und auch Zeile 15 des Algorithmus etwas angepasst}

\medskip\noindent
\textit{Proof:}
  Let $v$ be an idempotent node with label $S$. 
  Since $\tree$ is a factorization tree of $B_T$, the node $v$ corresponds to a substring $A$ of $B_T$, and $S = h(A)$. 
  From the definition of $h(A)$ it follows that $S$ contains exactly one element $e \in \Mphi_\emptyset$ (the element for the empty parameter annotation of $A$). 
  The product of two elements from $\Mphi_\emptyset$ is also in $\Mphi_\emptyset$. 
  Since $S$ is idempotent, it follows that $ee \in \Mphi_\emptyset \cap S$, and thus $ee = e$. 

  We now show that each $m \in S$ can be written as $m = m_1 e m_2$ with $m_1,m_2 \in S$. Let $K \subseteq \{y_1, \ldots, y_\ell\}$ be such that $m \in \Mphi_K$. Prove this by induction on the size of $K$.

  If $K = \emptyset$, then $m = e = eee$, as shown above, and we let $m_1 = m_2 = e$. 
  Otherwise, since $S$ is idempotent, $m = m_1'm_2'$ with $m_1',m_2' \in S$. 
  From the definition of $\Mphi_K$ we obtain that $m_1' \in \Mphi_{K_1}$, $m_2' \in \Mphi_{K_2}$ with $K_1 \cup K_2 = K$ and $K_1 \cap K_2 = \emptyset$. 
  If $K_1 = \emptyset$, then we choose $m_1 = e$ and $m_2 = m_2'$ and obtain $m_1em_2 = eem_2' = em_2' = m_1'm_2' = m$. The case $K_2 = \emptyset$ is analogous. 
  If $K_1$ and $K_2$ are nonempty, then both are strict subsets of $K$. 
  By induction $m_1' = m_1'' e m_2''$ for $m_1'',m_2'' \in S$, and hence $m = m_1'' e m_2''m_2'$ and we can choose $m_1 = m_1''$ and $m_2 = m_2''m_2'$. 
  Since $S$ is idempotent, $m_2 \in S$. This completes the proof of the claim.
\hfill$\lrcorner$

\bigskip
For the correctness of the algorithm, one can prove that the parameters selected by the algorithm generate the accepting monoid element chosen at the root of $\tree$ (by an induction on the height of the node $v$ in the tree). This means that the choice of parameters is consistent with the training set.
The running time is linear in the height of $\tree$ because for each parameter there is at most one monoid element on the stack, which means that the algorithm follows at most $\ell$ paths in the tree.

This completes the proof of Lemma~\ref{lem:compute-parameters}.
\end{proof}

In order to apply Lemma~\ref{lem:compute-parameters} in our algorithm, we first have to compute a Simon $h$-factorization tree $\tree$ of $B_T$ for a given training set $T$. We can do this starting from a factorization of $B_\emptyset$, as stated in the following lemma.

\begin{lemma} \label{lem:compute-training-factorization}
Let $B \in \Sigma^*$ and let $T$ be a training set over $B$. From a Simon $h$-factorization tree $\tree_B$ of $B_\emptyset$, one can compute a Simon $h$-factorization tree $\tree$ of $B_T$ in time $\mathcal O(\mathit{height}(\tree_B)\cdot|T|)$. The height of $\tree$ is in $\mathcal O(\mathit{height}(\tree_B))$.
\end{lemma}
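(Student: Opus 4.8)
The plan is to keep $\tree_B$ essentially intact and repair it only in a bounded neighbourhood of the positions occurring in $T$. Since $B_T$ is obtained from $B_\emptyset$ by changing the third component of at most $|T|$ of the $n$ symbols, the two strings have the same length and at most $|T|$ of the leaf labels of a factorization tree change. Call a node $v$ of $\tree_B$ \emph{affected} if some position of $T$ lies below it; the affected nodes form a subtree $\tree'$ of $\tree_B$, namely the union of the $\le|T|$ root-to-leaf paths ending in a changed leaf, so $\tree'$ has at most $|T|$ leaves and at most $\mathit{height}(\tree_B)\cdot|T|$ nodes in total. Every subtree of $\tree_B$ hanging off an \emph{un}affected child of an affected node is already a correct Simon $h$-factorization of the (unchanged) corresponding infix of $B_T$, hence is reused verbatim; only the part lying in $\tree'$ must be rebuilt. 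We first locate the affected nodes and, for each, the sublist of its affected children in left-to-right order, by walking up from each changed leaf to the root (processing the changed leaves in order of position, after a preliminary sort of $T$ if necessary); this takes time $O(\mathit{height}(\tree_B)\cdot|T|)$.

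Next we rebuild $\tree'$ bottom-up. A changed leaf is merely relabelled by its new value under $h$. An affected binary node is kept, its label recomputed as the product of the (new) labels of its two children. The interesting case is an affected idempotent node $v$: in $\tree_B$ its children $v_1,\dots,v_i$ all carry the idempotent label $E=\tree_B(v)\in\mathcal M$, and now $p\le i$ of them, at some positions $j_1<\dots<j_p$ in the child list, are affected and receive new labels $E_1,\dots,E_p\in\mathcal M$ computed recursively, while the remaining children still carry $E$. The affected children split the child list into $p$ singletons and at most $p+1$ maximal runs of consecutive unaffected children; a run of $g\ge1$ children, all labelled $E$ with $E$ idempotent, is itself a legitimate Simon factorization node of label $E$ (idempotent if $g\ge3$, binary if $g=2$, the child itself if $g=1$), and we represent it simply by a pointer to its first and last element inside $v$'s child list, so that building it costs $O(1)$ and no child of $v$ is revisited. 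This turns the child list of $v$ into a sequence of at most $2p+1$ ``blocks'', each an element of $\mathcal M$; we apply the Simon factorization theorem (Theorem~\ref{thm:simonFactorization}) to this short sequence to obtain, in time $\mathrm{poly}(|\mathcal M|)\cdot(2p+1)=O(p)$, a Simon factorization tree of height at most $3|\mathcal M|$ whose leaves are these blocks. That tree is the rebuilt node $v$; its label is the product of the block labels, as required. Summing over all affected nodes, the rebuild costs $O(\sum_v(p_v+1))=O(\mathit{height}(\tree_B)\cdot|T|)$, since $\sum_v p_v$ is bounded by the number of edges of $\tree'$.

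It remains to verify the two stated bounds. That $\tree$ is a valid Simon $h$-factorization of $B_T$ is immediate from the construction: every node label is the product of its children's labels, every idempotent node has children with one common idempotent label, and the left-to-right leaf sequence is exactly $h(B_T[1]),\dots,h(B_T[n])$. For the height, it suffices to bound the depth of a node of $\tree'$, since unaffected subtrees are unchanged: rebuilding an affected node increases the distance to each node below it by at most $3|\mathcal M|+1$ (the height of the small Simon tree placed on top of it, plus one for a possible run node), and any root-to-leaf path of $\tree_B$ meets $O(\mathit{height}(\tree_B))$ affected nodes, so $\mathit{height}(\tree)=O(\mathit{height}(\tree_B))$.

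The main obstacle is handling affected idempotent nodes of large, non-constant degree $i$: a naive rebuild that re-groups all $i$ children under fresh nodes would cost $\Omega(i)$ per such node, hence $\Omega(n)$ in total, and re-grouping the children via a balanced binary tree would inflate the height by a factor $\Theta(\log|T|)$ per level. Both issues are removed at once by (i) representing a run of unaffected children implicitly as a sub-range of the original child list, so it is created in $O(1)$ time and the unchanged children are never touched, and (ii) re-factorizing the resulting sequence of only $O(p)$ blocks with the Simon factorization theorem for the constant-size monoid $\mathcal M$, which contributes only $O(1)$ to the height at each level.
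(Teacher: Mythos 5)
Your proof is correct, but it takes a noticeably different route from the paper's. The paper cuts $B_T$ into $2|T|+1$ factors (the training positions as singletons and the untouched gaps between them), extracts a Simon $h$-factorization tree for each gap from $\tree_B$ using the substring-query technique of Boja\'nczyk (tracing the two boundary paths, which can at most double the height), and then applies Theorem~\ref{thm:simonFactorization} once, to the flat sequence of the $2|T|+1$ root labels, plugging the gap trees in at the leaves of the resulting top tree. You instead never flatten: you keep $\tree_B$ in place, repair only the union of the $\le|T|$ root-to-leaf paths to the changed positions, and invoke Theorem~\ref{thm:simonFactorization} locally at each affected node on the short block sequence obtained by collapsing maximal runs of unaffected idempotent children into implicit range nodes. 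The two approaches rely on the same two ingredients --- verbatim reuse of unaffected subtrees and re-factorization of an $O(|T|)$-length sequence over the constant-size monoid $\mathcal M$ --- but your version makes explicit the idempotent-node surgery that the paper delegates to the cited substring technique, and your $O(1)$-per-run representation is exactly what keeps the cost per affected node proportional to its number of affected children rather than its degree. What the paper's decomposition buys is a single, clean application of the factorization theorem at the top and a simpler height accounting (the gap trees at most double in height, plus $3|\mathcal M|$ for the top tree); what yours buys is a self-contained argument that does not need the substring-extraction lemma as a black box, at the price of a slightly more delicate bottom-up bookkeeping and a height bound of the form $(3|\mathcal M|+1)\cdot\mathit{height}(\tree_B)$, which is still $O(\mathit{height}(\tree_B))$ since $\mathcal M$ is fixed. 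Both yield the claimed $O(\mathit{height}(\tree_B)\cdot|T|)$ running time.
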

\begin{proof}
  The rough idea is as follows: We cut $B$ into factors at the positions occurring in $T$. Each position in $T$ becomes one factor (consisting of a single position), and the other factors are the substrings between these positions.
  Then we insert the modified monoid elements at the positions from $T$, by changing the $\blank$ into the classification of the position in $T$.
  For the longer substrings (that are not touched by the training set) we compute a factorization tree, which can easily be done based on the one for the whole string. We obtain one monoid element for each factor (at the root of the trees for the longer strings).
  For this new sequence of monoid elements we apply Theorem~\ref{thm:simonFactorization}, obtaining a Simon factorization, which can be combined with the existing factorization trees for the substrings to obtain a factorization tree for $B_T$. 

    More formally, let $u_1, \ldots, u_t \in \{1, \ldots, |B|\}$
\mgrand{Changed to $j_i$ to $u_i$, because we usually denote instances in the training set by $u$} be the positions of $B$ occurring in the training set $T$ in ascending order. Let $u_0 = 0$ and $u_{t+1} = |B|+1$ to simplify the following definitions.
  For $i \in \{0, \ldots,   t\}$ define $B_i$ to be the substring of $B_\emptyset$ from position $u_i+1$ to $u_{i+1}-1$, and for $i \in \{1, \ldots,   t\}$ let $\gamma_i = (a,c) \in \Gamma$ be the letter $a$ of $B$ at position $u_i$, and $c$ the classification of $u_i$ in $T$.
Then $B_0 \gamma_1 B_1 \gamma_2 \cdots B_{t-1} \gamma_t B_t = B_T$

For the substrings $B_i$ we can compute a Simon $h$-factorization tree $\tree_i$ from $\tree_B$. 
This is done in a similar fashion as described in \cite{Bojanczyk12} for evaluating queries for substrings on a Simon factorization tree. 
The idea is illustrated in Figure~\ref{fig:simon-tree}, which shows a Simon factorization tree for the sequence $[m_1,m_2,e,e,e,m_3,m_4,e,m_5]$ on the left-hand side of the figure. 
The right-hand side of the figure shows a  Simon factorization tree for the sub-sequence $[m_2,e,e,e,m_3]$. 
Basically one has to trace the paths towards the root from the left-most and right-most leaf nodes of the tree for the sub-sequence. 
Along these paths one has to update some labels, delete some nodes, and inserting some new nodes in order to maintain the structure of a Simon factorization tree. 
In Figure~\ref{fig:simon-tree}, the nodes labeled with a product of elements are the ones that have been inserted. 
One observes that the insertion of new nodes might increase the height of the tree, but the height can at most double. 

\begin{figure}
\input{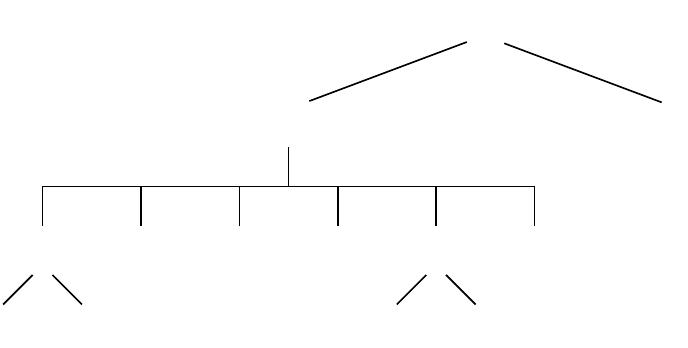_t} \hfill
\input{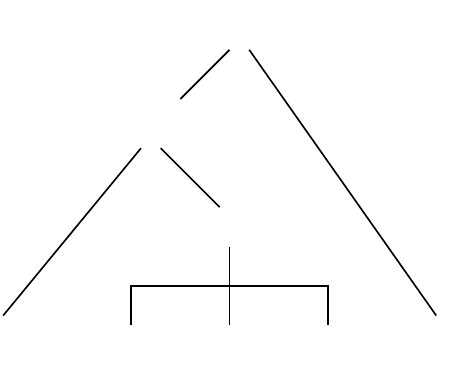_t}
\caption{Simon factorization tree} \label{fig:simon-tree}
\end{figure}

Let $m_i$ be the monoid element obtained at the root of the tree $\tree_i$. We obtain the sequence 
\[
[m_0, h(\gamma_1), m_1, h(\gamma_2), \ldots, m_{t-1}, h(\gamma_t), m_t]
\] 
of monoid elements alternating between the roots of the trees $\tree_i$ and the elements corresponding to the modified positions in $B$. 
Then we compute a Simon $h$-factorization tree $\tree'$ for this sequence $[m_0, h(\gamma_1), m_1, h(\gamma_2), \ldots, m_{t-1}, h(\gamma_t), m_t]$ according to Theorem~\ref{thm:simonFactorization}.
We can now plug in the trees $\tree_i$ at the corresponding leafs of $\tree'$ for $m_i$. This results in a Simon $h$-factorization tree for $B_T$. 

The complexity claims follow from the complexities in Theorem~\ref{thm:simonFactorization} and the fact that the height of the trees $\tree_i$ is linear in the height of $\tree_B$.
\end{proof}

Combining Theorem~\ref{thm:simonFactorization} with Lemma~\ref{lem:compute-parameters} and \ref{lem:compute-training-factorization}, we can build a learning algorithm as claimed in Theorem~\ref{the:precomputation-unary}.
\begin{itemize}\itemsep-2pt
\item \emph{Indexing Phase:} For a string $B$, compute a Simon $h$-factorization $\tree_B$ of $B_\emptyset$ according to Theorem~\ref{thm:simonFactorization}.
\item \emph{Learning Phase:} For a given Training set $T$, compute a Simon $h$-factorization tree $\tree$ of $B_T$ according to Lemma~\ref{lem:compute-training-factorization}, and then compute a consistent set of parameters according to Lemma~\ref{lem:compute-parameters}. 
\end{itemize}
The claimed complexities follow from the ones in Theorem~\ref{thm:simonFactorization} and Lemma~\ref{lem:compute-parameters} and \ref{lem:compute-training-factorization}.
This finishes the proof of Theorem~\ref{the:precomputation-unary}.

\paragraph{Formulas of higher arity.} The methods developed in this section for unary MSO formulas can be adapted to some extent to MSO formulas of higher arity. However, the learning time of this adapted algorithm is not linear in the size $|T|$ anymore.

\begin{theorem} \label{the:precomputation-higherArity}
There is a consistent parameter learner for MSO formulas with indexing time $O(|B|)$ for a string $B$ and learning time $O((k|T|)^\ell)$ for a training set $T$ over $B$, and $\ell$ the number of parameters. 
\end{theorem}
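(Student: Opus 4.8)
The plan is to reuse the monoid-and-factorization-tree machinery developed for the unary case (Theorem~\ref{the:precomputation-unary}), with one essential modification: for a $k$-ary formula a single position of $B$ may occur in several training tuples and in several different coordinates, so we can no longer fold the training set into a \emph{fixed} alphabet and index $B_T$ directly; instead the indexing phase will only depend on the formula. I would work over the annotated alphabet $\hat\Sigma=\Sigma\times 2^{\{x_1,\dots,x_k,y_1,\dots,y_\ell\}}$, let $(\hat M,\hat h,\hat F)$ recognise the language of annotations of a $\Sigma$-string in which each $x_j$ and each $y_r$ occurs exactly once and which make $\varphi(\bar x\smid\bar y)$ true (partitioning $\hat M$ by which variables have been seen, exactly as in Section~\ref{sec:preprocessing}), and then pass to the power monoid $\mathcal M=2^{\hat M}$ via the projection to $\Sigma$, so that the induced morphism $h:\Sigma^*\to\mathcal M$ sends a factor $A$ of $B$ to the set of all $\hat M$-values of all annotations of $A$. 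The indexing phase then computes a Simon $h$-factorization tree $\tree_B$ of $B$ itself by Theorem~\ref{thm:simonFactorization}; since $|\mathcal M|$ depends only on $q,k,\ell$, this takes time $O(|B|)$ and $\tree_B$ has constant height.

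In the learning phase, given $T=\{(\bar u_1,\lambda_1),\dots,(\bar u_t,\lambda_t)\}$, let $p_1<\dots<p_m$ be the (at most $k|T|$) positions occurring in $T$; they split $B$ into the pinned positions $p_1,\dots,p_m$ and the $m+1$ gaps between and around them, so every candidate parameter position lies in one of $O(k|T|)$ \emph{slots}. I would enumerate all $O((k|T|)^\ell)$ functions $\alpha$ assigning each parameter $y_r$ to a slot. For a fixed $\alpha$ each gap $G_j$ carries a fixed set $Y_j$ of parameters, and using $\tree_B$ together with the gap-extraction technique from the proof of Lemma~\ref{lem:compute-training-factorization} I would compute, for all gaps together in time $O(|T|)$, the set $\mathcal N_j\subseteq\hat M$ of $\hat M$-values obtainable by annotating $G_j$ with the variables in $Y_j$ in any order at any positions; each pinned position $p_j$ instead contributes for each example $i$ a single element $\beta_{i,j}\in\hat M$, namely the $\hat M$-value of the letter at $p_j$ annotated with the coordinates $\{x_a : u_{i,a}=p_j\}$ and the parameters $\{y_r : \alpha(y_r)=p_j\}$. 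Consistency of $\alpha$ then amounts to fixing one $n_j\in\mathcal N_j$ per gap so that, for every example $i$, the interleaved product of the $n_j$ and the $\beta_{i,j}$ lies in $\hat F$ exactly when $\lambda_i=1$; this is a constant-size search over the $n_j$, because as in Algorithm~\ref{alg:traversalBinaryTreeUnaryRelation} only the at most $\ell$ monoid elements carrying a parameter need to be traced down $\tree_B$, and once the $n_j$ are fixed the $t$ examples are verified by $t$ monoid computations along $\tree_B$. If some $\alpha$ passes, an actual parameter tuple is reconstructed by descending $\tree_B$ as in Lemma~\ref{lem:compute-parameters}; if no $\alpha$ passes, $T$ was not $\varphi$-consistent.

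The crux, and the place where I expect the main work to lie, is the completeness of the enumeration: if $T$ is $\varphi$-consistent at all, some consistent $\bar v$ must respect one of the enumerated slot assignments, i.e.\ it really is enough to let a parameter roam inside its gap abstractly rather than to name its exact position. This is the natural place to re-invoke the Simon factorization theorem: since the content of a gap does not depend on the example index, moving a parameter within its gap to a canonical position — a first or last child of an idempotent node of that gap's sub-tree, in the spirit of the Claim in the proof of Lemma~\ref{lem:compute-parameters} — leaves the $\hat M$-value of the gap unchanged and hence changes the classification of no example; iterating over all parameters lying inside gaps brings any consistent tuple into the enumerated family. The remaining ingredients — the handling of the empty training set and of parameters that coincide with pinned positions, and the bookkeeping needed to keep the total work over the $O((k|T|)^\ell)$ assignments within the stated bound by reusing the gap summaries and the partial products along the constant-height tree $\tree_B$ across examples — are routine adaptations of Lemmas~\ref{lem:compute-parameters} and~\ref{lem:compute-training-factorization}. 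I would finally note, as the surrounding text does, that the factor $(k|T|)^\ell$ is intrinsic here, since the parameter placement is now a global choice that cannot be read off from any single training tuple.
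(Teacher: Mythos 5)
Your proposal follows essentially the same route as the paper, which for this theorem only sketches the argument (an exhaustive search over the $O((k|T|)^\ell)$ relative positionings of the parameters with respect to the positions appearing in $T$, a factorization-tree-based feasibility check for each positioning, and synthesis of concrete parameters as in Algorithm~\ref{alg:traversalBinaryTreeUnaryRelation}) and defers all details to the full version. Your elaboration is consistent with that sketch; the only remark worth making is that the ``completeness of the enumeration'' you single out as the crux is in fact immediate once $\mathcal N_j$ is defined as the set of \emph{achievable} gap values (any concrete consistent $\bar v$ induces a slot assignment and gap values that pass the check), so the idempotent-node argument is needed only for the efficient synthesis of witness positions, not for completeness.
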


The main difference is that it is not possible to encode a complete training set for examples of higher arity by an annotation of the string $B$. For this reason, one has to do an exhaustive search over the possible parameter positions relative to the positions appearing in the training set. Again based on a factorization tree, one can check for each such relative positioning if concrete parameters with these relative positions exist that are consistent with the training set. If they exist, one can synthesize them with the same idea as for Algorithm~\ref{alg:traversalBinaryTreeUnaryRelation}. We save the details for a full version of this article.

\section{Conclusions}

We study the learnability results for MSO-definable hypotheses over string data. The key question we ask is whether learning is possible in time independent of (or at least sublinear in) the size of the background string. We prove that this is only possible if we allow to build an index of the string first, in time linear in the size of the string.

It is an interesting open question whether our results can be extended to tree-structured data (such as XML-documents). Note that there is no direct generalization of the factorization forests for trees.

\end{document}